%
%

\documentclass[11pt,a4paper]{article}
\usepackage{hyperref}
\usepackage{naaclhlt2019}
\usepackage{times}
\usepackage{latexsym}
\usepackage{booktabs}       
\usepackage{amsfonts}       
\usepackage{nicefrac}       
\usepackage{microtype}      
\usepackage{graphicx}
\usepackage{subfigure}
\usepackage{amsmath}
\usepackage{multirow}
\usepackage{amsthm}
\usepackage{amssymb}
\usepackage{mathtools}
\usepackage{color}
\usepackage{xcolor}
\usepackage{MnSymbol}
\usepackage{makecell}
\usepackage{arydshln}
\usepackage[shortlabels]{enumitem}
\usepackage{booktabs}
\usepackage{caption}
\usepackage{wrapfig,lipsum}
\usepackage{url}
\usepackage{algorithm}
\usepackage[noend]{algpseudocode}
\usepackage{bbding}
\usepackage{pifont}
\usepackage{wasysym}

\usepackage{pifont}
\newcommand{\cmark}{\ding{51}}%
\newcommand{\xmark}{\ding{55}}%

\makeatletter
\def\BState{\State\hskip-\ALG@thistlm}
\makeatother

\DeclareMathOperator*{\argmax}{arg\,max}

\newtheorem{theorem}{Theorem}
\newtheorem{corollary}{Corollary}
\newtheorem{proposition}{Proposition}

\aclfinalcopy 

\setlength\titlebox{5cm}

\newcommand\blfootnote[1]{%
  \begingroup
  \renewcommand\thefootnote{}\footnote{#1}%
  \addtocounter{footnote}{-1}%
  \endgroup
}

\title{Strong and Simple Baselines for Multimodal Utterance Embeddings}

\author{Paul Pu Liang$^{*}$, Yao Chong Lim$^{*}$,\\ {\bf Yao-Hung Hubert Tsai, Ruslan Salakhutdinov, Louis-Philippe Morency}\\
School of Computer Science, Carnegie Mellon University\\
{\tt \{pliang,yaochonl,yaohungt,rsalakhu,morency\}@cs.cmu.edu}
}

\date{}

\begin{document}
\maketitle

\begin{abstract}
Human language is a rich multimodal signal consisting of spoken words, facial expressions, body gestures, and vocal intonations. Learning representations for these spoken utterances is a complex research problem due to the presence of multiple heterogeneous sources of information. Recent advances in multimodal learning have followed the general trend of building more complex models that utilize various attention, memory and recurrent components. In this paper, we propose two simple but strong baselines to learn embeddings of multimodal utterances. The first baseline assumes a conditional factorization of the utterance into unimodal factors. Each unimodal factor is modeled using the simple form of a likelihood function obtained via a linear transformation of the embedding. We show that the optimal embedding can be derived in closed form by taking a weighted average of the unimodal features. In order to capture richer representations, our second baseline extends the first by factorizing into unimodal, bimodal, and trimodal factors, while retaining simplicity and efficiency during learning and inference. From a set of experiments across two tasks, we show strong performance on both supervised and semi-supervised multimodal prediction, as well as significant (10 times) speedups over neural models during inference. Overall, we believe that our strong baseline models offer new benchmarking options for future research in multimodal learning.\blfootnote{$^*$authors contributed equally}
\end{abstract}

\section{Introduction}
 
Human language is a rich multimodal signal consisting of spoken words, facial expressions, body gestures, and vocal intonations~\citep{first}. At the heart of many multimodal modeling tasks lies the challenge of learning rich representations of spoken utterances from multiple modalities~\citep{papo2014language}. However, learning representations for these spoken utterances is a complex research problem due to the presence of multiple heterogeneous sources of information~\citep{baltruvsaitis2017multimodal}. This challenging yet crucial research area has real-world applications in robotics~\citep{Montalvo:2017:MRB:3029798.3038315,NODA2014721}, dialogue systems~\citep{Johnston:2002:MAM:1073083.1073146,Rudnicky2005}, intelligent tutoring systems~\citep{Li12,10.1007/978-3-642-24571-8_21,10.1007/978-3-319-91464-0_15}, and healthcare diagnosis~\citep{Wentzel:2016:FAM:2985766.2985769,Lisetti:2003:DMI:861049.861064,DBLP:journals/corr/abs-1709-01796}. Recent progress on multimodal representation learning has investigated various neural models that utilize one or more of attention, memory and recurrent components~\citep{Yang_2017_CVPR,multistage}. There has also been a general trend of building more complicated models for improved performance.

In this paper, we propose two simple but strong baselines to learn embeddings of multimodal utterances. The first baseline assumes a factorization of the utterance into unimodal factors conditioned on the joint embedding. Each unimodal factor is modeled using the simple form of a likelihood function obtained via a linear transformation of the utterance embedding. We derive a coordinate-ascent style algorithm~\citep{Wright:2015:CDA:2783158.2783189} to learn the optimal multimodal embeddings under our model. We show that, under some assumptions, maximum likelihood estimation for the utterance embedding can be derived in closed form and is equivalent to computing a weighted average of the language, visual and acoustic features. Only a few linear transformation parameters need to be learned. In order to capture bimodal and trimodal representations, our second baseline extends the first one by assuming a factorization into unimodal, bimodal, and trimodal factors~\cite{tensoremnlp17}. To summarize, our simple baselines 1) consist primarily of linear functions, 2) have few parameters, and 3) can be approximately solved in a closed form solution. As a result, they demonstrate simplicity and efficiency during learning and inference.

We perform a set of experiments across two tasks and datasets spanning multimodal personality traits recognition~\cite{Park:2014:CAP:2663204.2663260} and multimodal sentiment analysis~\citep{zadeh2016multimodal}. Our proposed baseline models 1) achieve competitive performance on supervised multimodal learning, 2) improve upon classical deep autoencoders for semi-supervised multimodal learning, and 3) are up to 10 times faster during inference. Overall, we believe that our baseline models offer new benchmarks for future multimodal research.
\section{Related Work}
\label{Related Work}

We provide a review of {\em sentence embeddings}, {\em multimodal utterance embeddings}, and {\em strong baselines}.

\subsection{Language-Based Sentence Embeddings}

Sentence embeddings are crucial for down-stream tasks such as document classification, opinion analysis, and machine translation. With the advent of deep neural networks, multiple network designs such as Recurrent Neural Networks (RNNs)~\cite{rumelhart1986learning}, Long-Short Term Memory networks (LSTMs)~\cite{Hochreiter:1997:LSM:1246443.1246450}, Temporal Convolutional Networks~\cite{bai2018empirical}, and the Transformer~\cite{NIPS2017_7181} have been proposed and achieve superior performance. However, more training data is required for larger models~\cite{peters2018deep}. In light of this challenge, researchers have started to leverage unsupervised training objectives to learn sentence embedding which showed state-of-the-art performance across multiple tasks~\cite{devlin2018bert}. In our paper, we go beyond unimodal language-based sentence embeddings and consider multimodal spoken utterances where additional information from the nonverbal behaviors is crucial to infer speaker intent.

\subsection{Multimodal Utterance Embeddings}

Learning multimodal utterance embeddings brings a new level of complexity as it requires modeling both intra-modal and inter-modal interactions~\cite{multistage}. Previous approaches have explored variants of graphical models and neural networks for multimodal data. RNNs~\citep{elman1990finding,Jain:1999:RNN:553011}, LSTMs~\cite{Hochreiter:1997:LSM:1246443.1246450}, and convolutional neural networks~\cite{NIPS2012_4824} have been extended for multimodal settings~\cite{rajagopalan2016extending,lee2018convolutional}. Experiments on more advanced networks suggested that encouraging correlation between modalities~\cite{Yang_2017_CVPR}, enforcing disentanglement on multimodal representations~\cite{factorized}, and using attention to weight modalities~\cite{gulrajani2017improved} led to better performing multimodal representations. In our paper, we present a new perspective on learning multimodal utterance embeddings by assuming a conditional factorization over the language, visual and acoustic features. Our simple but strong baseline models offer an alternative approach that is extremely fast and competitive on both supervised and semi-supervised prediction tasks.

\begin{figure*}[t!]
\centering
\vspace{-0mm}
\includegraphics[width=1.0\textwidth]{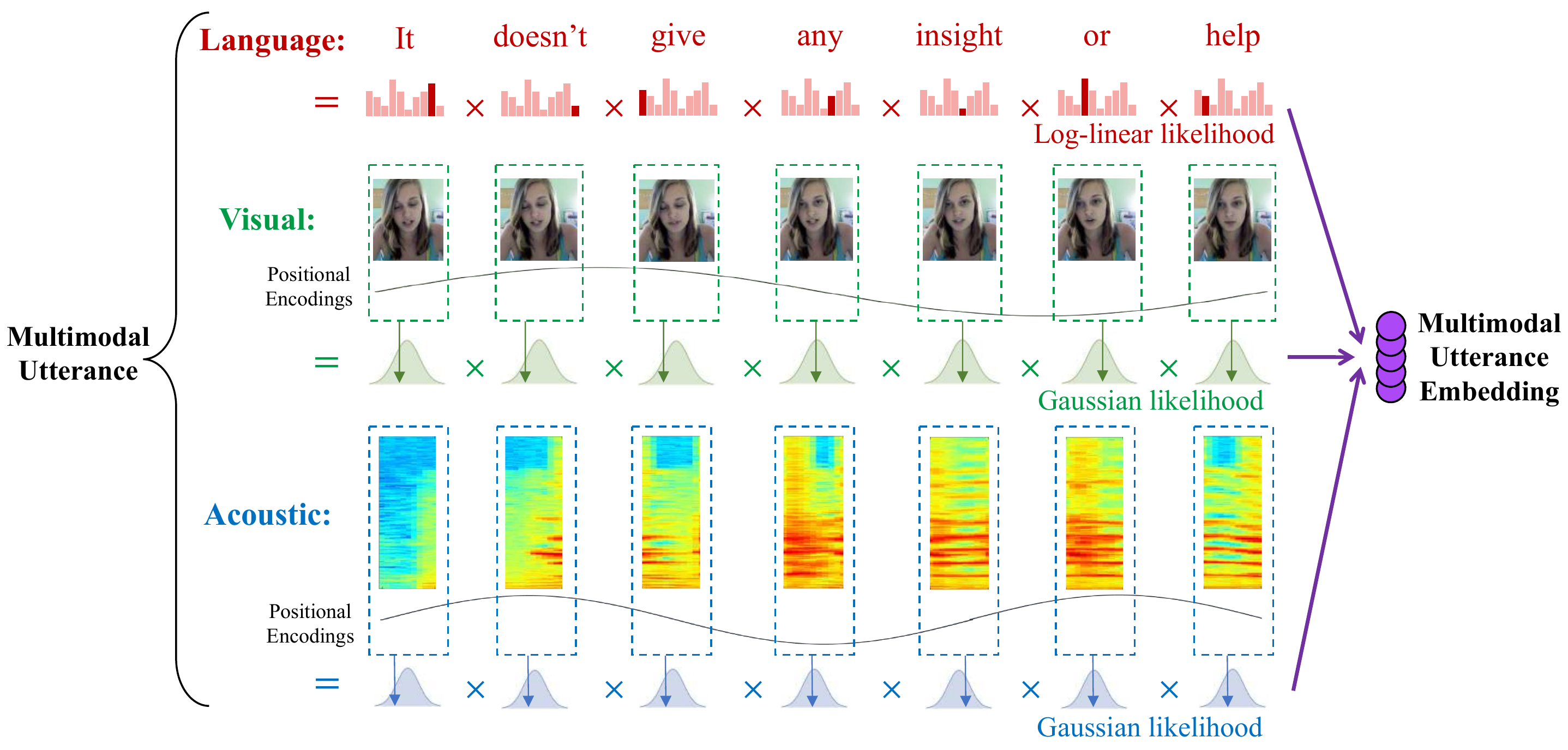}
\caption{Our baseline model assumes a factorization of the multimodal utterance into unimodal factors conditioned on the joint utterance embedding. Each unimodal factor is modeled using the simple form of a likelihood function obtained via a linear transformation of the utterance embedding. We show that, under some assumptions, maximum likelihood estimation for the utterance embedding can be derived in closed form and is equivalent to taking a weighted average of the language, visual and acoustic features.}
\vspace{-4mm}
\label{overview}
\end{figure*}

\subsection{Strong Baseline Models}

A recent trend in NLP research has been geared towards building simple but strong baselines~\cite{arora2,P18-1041,anonymous2019no,W17-3203}. The effectiveness of these baselines indicate that complicated network components are not always required. For example,~\citet{arora2} constructed sentence embeddings from weighted combinations of word embeddings which requires no trainable parameters yet generalizes well to down-stream tasks. \citet{P18-1041} proposed parameter-free pooling operations on word embeddings for document classification, text sequence matching, and text tagging. \citet{anonymous2019no} discovered that random sentence encoders achieve competitive performance as compared to larger models that involve expensive training and tuning. \citet{W17-3203} emphasized the importance of choosing a basic neural machine translation model and carefully reporting the relative gains achieved by the proposed techniques. Authors in other domains have also highlighted the importance of developing strong baselines~\cite{lakshminarayanan2017simple,sharif2014cnn}. To the best of our knowledge, our paper is the first to propose and evaluate strong, non-neural baselines for multimodal utterance embeddings.
\section{Baselines for Multimodal Learning}

\subsection{Notation}

Suppose we are given video data where each utterance segment is denoted as $\mathbf{s}$. Each segment contains individual words $w$ in a sequence $\mathbf{w}$, visual features $v$ in a sequence $\mathbf{v}$, and acoustic features $a$ in a sequence $\mathbf{a}$. We aim to learn a representation $m_{\mathbf{s}}$ for each segment that summarizes information present in the multimodal utterance.

\subsection{Background}

Our model is related to the work done by~\citet{arora2016latent} and~\citet{arora2}. In the following, we first provide a brief review of their method. Given a sentence,~\citet{arora2016latent} aims to learn a sentence embedding $c_s$. They do so by assuming that the probability of observing a word $w_t$ at time $t$ is given by a log-linear word production model~\citep{Mnih:2007:TNG:1273496.1273577} with respect to $c_s$:
\begin{equation}
\label{initial}
\mathbb{P} [\textrm{$w_t$} | c_s] = \frac{\exp \left( \langle v_{w_t}, c_s \rangle \right)}{Z_{c_s}},
\end{equation}
where $c_s$ is the sentence embedding (context), $v_{w_t}$ represents the word vector associated with word $w_t$ and $Z_{c_s} = \sum_{w \in V} \exp \left( \langle v_{w}, c_s \rangle \right)$ is a normalizing constant over all words in the vocabulary. Given this posterior probability, the desired sentence embedding $c_s$ can be obtained by maximizing Equation~\eqref{initial} with respect to $c_s$. Under some assumptions on $c_s$, this maximization yields a closed-form solution which provides an efficient learning algorithm for sentence embeddings.

\citet{arora2} further extends this model by introducing a ``smoothing term'' $\alpha$ to account for the production of frequent stop words or out of context words independent of the discourse vector. Given estimated unigram probabilities $p(w)$, the probability of a word at time $t$ is given by
\begin{equation}
\label{text}
\mathbb{P} [\textrm{$w_t$} | c_s] = \alpha p(w_t) + (1-\alpha) \frac{\exp \left( \langle v_{w_t}, c_s \rangle \right)}{Z_{c_s}}.
\end{equation}
Under this model with the additional hyperparameter $\alpha$, we can still obtain a closed-form solution for the optimal $c_s$.

\subsection{Baseline 1: Factorized Unimodal Model}
\label{model}

In this subsection, we outline our method for learning representations of multimodal utterances. An overview of our proposed baseline model is shown in Figure~\ref{overview}. Our method begins by assuming a factorization of the multimodal utterance into unimodal factors conditioned on the joint utterance embedding. Next, each unimodal factor is modeled using the simple form of a likelihood function obtained via a linear transformation of the utterance embedding. Finally, we incorporate positional encodings to represent temporal information in the features. We first present the details of our proposed baseline before deriving a coordinate ascent style optimization algorithm to learn utterance embeddings in our model.

\noindent \textbf{Unimodal Factorization:} We use $m_{\mathbf{s}}$ to represent the multimodal utterance embedding. To begin, we simplify the composition of $m_{\mathbf{s}}$ by assuming that the segment $\mathbf{s}$ can be conditionally factorized into words ($\mathbf{w}$), visual features ($\mathbf{v}$), and acoustic features ($\mathbf{a}$). Each factor is also associated with a temperature hyperparameter ($\alpha_{\mathbf{w}}$, $\alpha_{\mathbf{v}}$, $\alpha_{\mathbf{a}}$) that represents the contribution of each factor towards the multimodal utterance. The likelihood of a segment $\mathbf{s}$ given the embedding $m_{\mathbf{s}}$ is therefore
\begin{align}
\label{factorize}
\nonumber &\mathbb{P} [\mathbf{s}|m_{\mathbf{s}}] = \mathbb{P} [\mathbf{w}|m_{\mathbf{s}}]^{\alpha_{\mathbf{w}}} \left(\mathbb{P} [\mathbf{v}|m_{\mathbf{s}}]\right)^{\alpha_{\mathbf{v}}} \mathbb{P} [\mathbf{a}|m_{\mathbf{s}}]^{\alpha_{\mathbf{a}}} \\
&= \prod_{w \in \mathbf{w}} \mathbb{P} [w|m_{\mathbf{s}}]^{\alpha_{\mathbf{w}}} \prod_{v \in \mathbf{v}} \mathbb{P} [v|m_{\mathbf{s}}]^{\alpha_{\mathbf{v}}} \prod_{a \in \mathbf{a}} \mathbb{P} [a|m_{\mathbf{s}}]^{\alpha_{\mathbf{a}}}.
\end{align}

\vspace{-0mm}
\noindent \textbf{Choice of Likelihood Functions:} As suggested by~\citet{arora2}, given $m_{\mathbf{s}}$, we model the probability of a word $w$ using Equation~\eqref{text}. In order to analytically solve for ${m_{\mathbf{s}}}$, a lemma is introduced by~\citet{arora2016latent,arora2} which states that the partition function $Z_{{m_{\mathbf{s}}}}$ is concentrated around some constant $Z$ (for all $m_{\mathbf{s}}$). This lemma is also known as the ``self-normalizing'' phenomenon of log-linear models~\citep{N15-1027,Andreas:2015:ASL:2969239.2969438}. We use the same assumption and treat $Z_{{m_{\mathbf{s}}}_t} \approx Z$ for all $m_{\mathbf{s}}$.

Unlike discrete text tokens, the visual features are continuous. We assume that the visual features are generated from an isotropic Gaussian distribution. In section~\ref{gaussian_vis}, we visually analyze the distribution of the features for real world datasets and show that these likelihood modeling assumptions are indeed justified. The Gaussian distribution is parametrized by simple linear transformations $W_v^\mu, W_v^\sigma \in \mathbb{R}^{|v| \times |m_{\mathbf{s}}|}$ and $b_v^\mu, b_v^\sigma \in \mathbb{R}^{|v|}$:
\begin{align}
\label{gaussian}
v | m_{\mathbf{s}} &\sim \mathcal{N}(\mu_v, \sigma_v^2), \\
\mu_v &= W_v^\mu m_{\mathbf{s}} + b_v^\mu, \\
\sigma_v &= \textrm{diag} \left( \exp \left( W_v^\sigma m_{\mathbf{s}} + b_v^\sigma \right) \right).
\end{align}
Similarly, we also assume that the continuous acoustic features are generated from a different isotropic Gaussian distribution parametrized as:
\begin{align}
a | m_{\mathbf{s}} &\sim \mathcal{N}(\mu_a, \sigma_a^2), \\
\mu_a &= W_a^\mu m_{\mathbf{s}} + b_a^\mu, \\
\sigma_a &= \textrm{diag} \left( \exp \left( W_a^\sigma m_{\mathbf{s}} + b_a^\sigma \right) \right).
\end{align}

\noindent \textbf{Positional Encodings:} Finally, we incorporate positional encodings~\cite{NIPS2017_7181} into the features to represent temporal information. We use $d$-dimensional positional encodings with entries:
\begin{align}
PE_{pos,2i} &= \sin \left( pos/10000^{2i/d} \right), \\
PE_{pos,2i+1} &= \cos \left( pos/10000^{2i/d} \right).
\end{align}
where $pos$ is the position (time step) and $i \in [1,d]$ indexes the dimension of the positional encodings. We call this resulting model Multimodal Baseline 1 (\textbf{MMB1}).

\subsection{Optimization for Baseline 1}
\label{optimization}

We define our objective function by the log-likelihood of the observed multimodal utterance $\mathbf{s}$. The maximum likelihood estimator of the utterance embedding $m_{\mathbf{s}}$ and the linear transformation parameters $W$ and $b$ can then be obtained by maximizing this objective
\begin{align}
\label{objective}
\mathcal{L} (m_{\mathbf{s}}, W^{}_{}, b^{}_{}; \mathbf{s}) &= \log \mathbb{P} [\mathbf{s}|m_{\mathbf{s}}; W^{}_{}, b^{}_{}],
\end{align}
where we use $W$ and $b$ to denote all linear transformation parameters.

\noindent \textbf{Coordinate Ascent Style Algorithm:}
Since the objective~\eqref{objective} is not jointly convex in $m_{\mathbf{s}}$, $W$ and $b$, we optimize by alternating between: 1) solving for $m_{\mathbf{s}}$ given the parameters $W$ and $b$ at the current iterate, and 2) given $m_{\mathbf{s}}$, updating $W$ and $b$ using a gradient-based algorithm. This resembles the coordinate ascent optimization algorithm which maximizes the objective according to each coordinate separately~\citep{Tseng:2001:CBC:565614.565615,Wright:2015:CDA:2783158.2783189}. Algorithm~\ref{sb1} presents our method for learning utterance embeddings. In the following sections, we describe how to solve for $m_{\mathbf{s}}$ and update $W$ and $b$.

\begin{algorithm}[t!]
\caption{Baseline 1}
\label{sb1}
\begin{algorithmic}[1]
\Procedure{Baseline 1}{}
\State Initialize $m_{\mathbf{s}}, W^{}_{}, b^{}_{}$.
\For{each iteration}
\State Fix $W^{(k)},b^{(k)}$, compute $m_{\mathbf{s}}^{(k)}$ by~\eqref{mle_m}.
\State Fix $m_{\mathbf{s}}^{(k)}$, compute $\nabla_W \mathcal{L}$ by (\ref{grad_W1}-\ref{grad_W2}).
\State Fix $m_{\mathbf{s}}^{(k)}$, compute $\nabla_b \mathcal{L}$ by (\ref{grad_b1}-\ref{grad_b2}).
\State Update $W^{(k+1)}$ from $W^{(k)}$ and $\nabla_W \mathcal{L}$.
\State Update $b^{(k+1)}$ from $b^{(k)}$ and $\nabla_b \mathcal{L}$.
\EndFor
\EndProcedure
\end{algorithmic}
\end{algorithm}

\noindent \textbf{Solving for $m_{\mathbf{s}}$:} We first derive an algorithm to solve for the optimal $m_{\mathbf{s}}$ given the log likelihood objective in~\eqref{objective}, and parameters $W$ and $b$.
\begin{theorem}
\label{mle}
[Solving for $m_{\mathbf{s}}$]  Assume the optimal $m_{\mathbf{s}}$ lies on the unit sphere (i.e. $\| m_{\mathbf{s}} \|_2^2 = 1$), then closed form of $m_{\mathbf{s}}$ in line 4 in Algorithm~\ref{sb1} is
\vspace{-0mm}
\begin{align}
\label{mle_m}
\nonumber m_{\mathbf{s}}^* &= \sum_{w \in s} \psi_w w \\
\nonumber &+ \sum_{v \in s} \left( W_v^{\mu\top} \tilde{v}^{(1)} \psi_v^{(1)} + W_v^{\sigma\top} \tilde{v}^{(2)} \psi_v^{(2)} \right)\\
&+ \sum_{a \in s} \left( W_a^{\mu\top} \tilde{a}^{(1)} \psi_a^{(1)} + W_a^{\sigma\top} \tilde{a}^{(2)} \psi_a^{(2)} \right).
\end{align}
\vspace{-0mm}
where the shifted visual and acoustic features are:
\begin{align}
\tilde{v}^{(1)} = v-b_v^\mu, \ \tilde{v}^{(2)} = (v-b_v^\mu) \otimes (v-b_v^\mu), \\
\tilde{a}^{(1)} = a-b_a^\mu, \ \tilde{a}^{(2)} = (a-b_a^\mu) \otimes (a-b_a^\mu),
\vspace{-0mm}
\end{align}
where $\otimes$ denotes Hadamard (element-wise) product and the weights $\psi$'s are given as follows:
\vspace{-0mm}
\begin{align}
\psi_w &= \frac{{\alpha_\mathbf{w}} (1-\alpha)/(\alpha Z)}{p(w)+(1-\alpha)/(\alpha Z)}, \\
\psi_v^{(1)} &= \textup{diag} \left( \frac{\alpha_{\mathbf{v}}}{\exp \left(2 b_v^\sigma \right)} \right), \\
\psi_v^{(2)} &= \textup{diag} \left(\frac{\alpha_{\mathbf{v}}}{\exp \left(2b^\sigma_v \right)}-\alpha_{\mathbf{v}}\right), \\
\psi_a^{(1)} &= \textup{diag} \left( \frac{\alpha_{\mathbf{a}}}{\exp \left(2 b_a^\sigma \right)} \right), \\
\psi_a^{(2)} &= \textup{diag} \left(\frac{\alpha_{\mathbf{a}}}{\exp \left(2b^\sigma_a \right)}-\alpha_{\mathbf{a}}\right).
\end{align}
\vspace{-0mm}
\end{theorem}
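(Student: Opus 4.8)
The plan is to turn the constrained maximization of $\mathcal{L}(m_{\mathbf{s}}) = \log \mathbb{P}[\mathbf{s}\mid m_{\mathbf{s}}]$ into the maximization of a single \emph{linear} functional of $m_{\mathbf{s}}$ over the unit sphere. First I would use the factorization in~\eqref{factorize} to write the objective additively as $\mathcal{L}(m_{\mathbf{s}}) = \alpha_{\mathbf{w}}\sum_{w\in\mathbf{w}}\log\mathbb{P}[w\mid m_{\mathbf{s}}] + \alpha_{\mathbf{v}}\sum_{v\in\mathbf{v}}\log\mathbb{P}[v\mid m_{\mathbf{s}}] + \alpha_{\mathbf{a}}\sum_{a\in\mathbf{a}}\log\mathbb{P}[a\mid m_{\mathbf{s}}]$ and treat each modality separately. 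Following~\citet{arora2016latent,arora2}, the key step is a first-order Taylor expansion of every log-factor about $m_{\mathbf{s}}=0$, after which $\mathcal{L}(m_{\mathbf{s}}) \approx \mathrm{const} + \langle g, m_{\mathbf{s}}\rangle$ with $g$ independent of $m_{\mathbf{s}}$. Since $\langle g, m_{\mathbf{s}}\rangle \le \|g\|_2$ on $\|m_{\mathbf{s}}\|_2^2 = 1$ by Cauchy--Schwarz, with equality exactly at $m_{\mathbf{s}} = g/\|g\|_2$, the maximizer is the (normalized) vector $g$. It therefore suffices to show that $g$ equals the right-hand side of~\eqref{mle_m} up to the scaling fixed by the norm constraint.

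For the language factor I would invoke the self-normalizing assumption $Z_{m_{\mathbf{s}}}\approx Z$ together with the concentration lemma of~\citet{arora2016latent,arora2}, so that each $\log\mathbb{P}[w\mid m_{\mathbf{s}}]$ from~\eqref{text} is smooth, and expand it to first order at the origin. At $m_{\mathbf{s}}=0$ the exponential term equals $1$, and a direct gradient computation gives $\nabla_{m_{\mathbf{s}}}\log\mathbb{P}[w\mid m_{\mathbf{s}}]\big|_{0} = \frac{(1-\alpha)/(\alpha Z)}{p(w)+(1-\alpha)/(\alpha Z)}\,w$. Multiplying by the exponent $\alpha_{\mathbf{w}}$ recovers the scalar weight $\psi_w$, so the language channel contributes $\sum_{w\in\mathbf{s}}\psi_w w$ to $g$, matching the first line of~\eqref{mle_m}.

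The heart of the proof is the continuous factors. Writing $\mu_v = W_v^\mu m_{\mathbf{s}} + b_v^\mu$ and $s_v = W_v^\sigma m_{\mathbf{s}} + b_v^\sigma$ so that $\sigma_v^2 = \exp(2 s_v)$ componentwise, the isotropic-Gaussian log-density is $\log\mathbb{P}[v\mid m_{\mathbf{s}}] = -\frac{1}{2}\sum_j\big(\log 2\pi + 2 s_{v,j} + (v_j-\mu_{v,j})^2\exp(-2 s_{v,j})\big)$, in which $m_{\mathbf{s}}$ enters through \emph{both} the mean $\mu_v$ and the log-standard-deviation $s_v$. I would Taylor-expand this to first order in $m_{\mathbf{s}}$ about the origin, using $\exp(-2 s_{v,j}) = \exp(-2 b_{v,j}^\sigma)\big(1 - 2(W_v^\sigma m_{\mathbf{s}})_j\big) + O(\|m_{\mathbf{s}}\|^2)$ and $(v_j-\mu_{v,j})^2 = \tilde{v}^{(2)}_j - 2\tilde{v}^{(1)}_j(W_v^\mu m_{\mathbf{s}})_j + O(\|m_{\mathbf{s}}\|^2)$, and collect the coefficient of $m_{\mathbf{s}}$. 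Using the chain rule, the mean channel repackages its per-coordinate coefficients through $W_v^{\mu\top}$ and contributes $W_v^{\mu\top}\big(\tilde{v}^{(1)}\otimes \alpha_{\mathbf{v}}\exp(-2 b_v^\sigma)\big)$, i.e. $W_v^{\mu\top}\tilde{v}^{(1)}\psi_v^{(1)}$; the variance channel collects the $-s_{v,j}$ term from the log-determinant together with the $\exp(-2 s_{v,j})$ expansion and contributes $W_v^{\sigma\top}\tilde{v}^{(2)}\psi_v^{(2)}$ with the offset diagonal $\psi_v^{(2)}$. The acoustic factor is obtained verbatim under $v\mapsto a$. Summing the three channels yields $g$, the right-hand side of~\eqref{mle_m}.

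I expect the main obstacle to be the bookkeeping of this last expansion. Because $m_{\mathbf{s}}$ sits inside both $\mu_v$ and the exponentiated $\sigma_v$, the first-order expansion couples the two channels, and one must (i) discard all terms of order $\|m_{\mathbf{s}}\|^2$ and higher, (ii) correctly fold the log-determinant contribution $-\sum_j s_{v,j}$ into the $W_v^{\sigma\top}$ channel so that the constant offset in $\psi_v^{(2)} = \mathrm{diag}\big(\alpha_{\mathbf{v}}\exp(-2 b_v^\sigma) - \alpha_{\mathbf{v}}\big)$ appears, and (iii) keep the transposes straight so the coefficients group cleanly as $W_v^{\mu\top}(\cdot)$ and $W_v^{\sigma\top}(\cdot)$. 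Getting this offset into exactly the stated multiplicative form against $\tilde{v}^{(2)}$ relies on the same standardization/concentration of the continuous features invoked for the Gaussian modeling assumption, which I would state explicitly as the regime in which~\eqref{mle_m} is exact rather than merely first-order accurate.
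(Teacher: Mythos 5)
Your proposal is correct and follows essentially the same route as the paper's own proof: a first-order Taylor expansion of each log-factor about $m_{\mathbf{s}}=0$ (using the self-normalization assumption for the word factor and the per-coordinate Gaussian log-density for the visual/acoustic factors), followed by maximizing the resulting linear functional $\langle g, m_{\mathbf{s}}\rangle$ over the unit sphere. The one worry you raise at the end --- whether the $-\alpha_{\mathbf{v}}$ offset can legitimately be written multiplicatively against $\tilde{v}^{(2)}$ --- is a quirk of the theorem's matrix notation rather than a mathematical gap (the offset contribution is really $-\alpha_{\mathbf{v}}\sum_i W_v^\sigma(i)$, independent of $\tilde{v}^{(2)}$, exactly as your coordinatewise expansion produces it), so no additional standardization assumption is needed.
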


\begin{proof}
The proof is adapted from~\citet{arora2} and involves computing the gradients $\nabla_{m_{\mathbf{s}}} \log \mathbb{P} [\cdot|m_{\mathbf{s}}]^{\alpha_{\mathbf{\cdot}}}$. We express $\log \mathbb{P} [\cdot|m_{\mathbf{s}}]$ via a Taylor expansion approximation and we observe that $\log \mathbb{P} [\cdot|m_{\mathbf{s}}] \approx c + \langle m_{\mathbf{s}}, g \rangle$ for a constant $c$ and a vector $g$. Then, we can obtain $m_{\mathbf{s}}^*$ by computing $\argmax_{m_{\mathbf{s}}^*} \mathcal{L} (m_{\mathbf{s}}, W^{}_{}, b^{}_{}; \mathbf{s})$ which yields a closed-form solution. Please refer to the supplementary material for proof details.
\end{proof}

Observe that the optimal embedding $m_{\mathbf{s}}^*$ is a weighted average of the word features $w$ and the (shifted and transformed) visual and acoustic features, $\tilde{v}$ and $\tilde{a}$. Our choice of a Gaussian likelihood for the visual and acoustic features introduces a squared term $(v-b_v^\mu) \otimes (v-b_v^\mu)$ to account for the $\ell_2$ distance present in the pdf. The transformation matrix $W^\top$ transforms the visual and acoustic features into the multimodal embedding space. Regarding the weights $\psi$, note that: 1) the weights are proportional to the global temperatures $\alpha$ assigned to that modality, 2) the weights $\psi_w$ are inversely proportional to $p(w)$ (rare words carry more weight), and 3) the weights $\psi_v$ and $\psi_a$ scale each feature dimension inversely by their magnitude.

\noindent \textbf{Updating $W$ and $b$:} To find the optimal linear transformation parameters $W$ and $b$ to maximize the objective in~\eqref{objective}, we perform gradient-based optimization on $W$ and $b$ (in Algorithm~\ref{sb1} line 5-8).

\begin{proposition}
\label{gradient}
[Updating $W$ and $b$] The gradients $\nabla_W \mathcal{L} (m_{\mathbf{s}}, W^{}_{}, b^{}_{})$ and $\nabla_b \mathcal{L} (m_{\mathbf{s}}, W^{}_{}, b^{}_{})$, in each dimension, are:
{\fontsize{10}{12}\selectfont
\begin{align}
\label{grad_W1}
\nabla_{{W^\mu_v}_{ij}} \mathcal{L} (m_{\mathbf{s}}, W^{}_{}, b^{}_{}) &= \alpha_{\mathbf{v}} \textup{tr} \left[ \left(\sigma_v^{-2} (v-\mu_v) \right)^\top {m_{\mathbf{s}}}_j \right],
\end{align}
\vspace{-0mm}
\begin{align}
\label{grad_W2}
\nonumber & \nabla_{{W^\sigma_v}_{ij}} \mathcal{L} (m_{\mathbf{s}}, W^{}_{}, b^{}_{}) = \\
&- \frac{\alpha_{\mathbf{v}}}{2} \textup{tr} \left[ \left( \sigma_v^{-2} - \sigma_v^{-2} (v-\mu_v) (v-\mu_v)^\top \sigma_v^{-2} \right)^\top {\sigma_v}_{ii} {m_{\mathbf{s}}}_j \right],
\end{align}
\vspace{-0mm}
\begin{align}
\label{grad_b1}
\small
\nabla_{{b^\mu_v}_{i}} \mathcal{L} (m_{\mathbf{s}}, W^{}_{}, b^{}_{}) &= \alpha_{\mathbf{v}} \textup{tr} \left[ \left(\sigma_v^{-2} (v-\mu_v) \right)^\top \right],
\end{align}
\vspace{-0mm}
\begin{align}
\label{grad_b2}
\small
\nonumber & \nabla_{{b^\sigma_v}_{i}} \mathcal{L} (m_{\mathbf{s}}, W^{}_{}, b^{}_{}) \\
&= - \frac{\alpha_{\mathbf{v}}}{2} \textup{tr} \left[ \left( \sigma_v^{-2} - \sigma_v^{-2} (v-\mu_v) (v-\mu_v)^\top \sigma_v^{-2} \right)^\top {\sigma_v}_{ii} \right].
\end{align}
}
\end{proposition}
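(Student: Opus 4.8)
The plan is to exploit the fact that, among all the parameters, $W_v^\mu, W_v^\sigma, b_v^\mu, b_v^\sigma$ enter the objective \eqref{objective} only through the visual factor. Taking logarithms in the factorization \eqref{factorize}, the objective splits additively as $\mathcal{L} = \alpha_{\mathbf{w}}\sum_{w}\log\mathbb{P}[w|m_{\mathbf{s}}] + \alpha_{\mathbf{v}}\sum_{v}\log\mathbb{P}[v|m_{\mathbf{s}}] + \alpha_{\mathbf{a}}\sum_{a}\log\mathbb{P}[a|m_{\mathbf{s}}]$, and the word and acoustic terms are constant in the visual parameters. Hence it suffices to differentiate $\alpha_{\mathbf{v}}\sum_v \log\mathbb{P}[v|m_{\mathbf{s}}]$, and the acoustic gradients then follow by relabeling $v \mapsto a$. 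First I would write out the diagonal-Gaussian log-density from \eqref{gaussian} explicitly as $\log \mathbb{P}[v|m_{\mathbf{s}}] = -\tfrac12\log|\sigma_v^2| - \tfrac12 (v-\mu_v)^\top \sigma_v^{-2}(v-\mu_v) + \text{const}$, keeping the log-determinant term and the Mahalanobis term separate, since the mean parameters touch only the second term while the variance parameters touch both.

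Next I would compute the gradients with respect to the intermediate quantities $\mu_v$ and $\Sigma_v := \sigma_v^2$ using standard matrix-calculus identities, obtaining $\nabla_{\mu_v}\log\mathbb{P}[v|m_{\mathbf{s}}] = \sigma_v^{-2}(v-\mu_v)$ and $\nabla_{\Sigma_v}\log\mathbb{P}[v|m_{\mathbf{s}}] = -\tfrac12\big(\sigma_v^{-2} - \sigma_v^{-2}(v-\mu_v)(v-\mu_v)^\top \sigma_v^{-2}\big)$, where the latter uses $\partial \log|\Sigma_v| = \text{tr}(\Sigma_v^{-1}\,\partial\Sigma_v)$ together with $\partial\Sigma_v^{-1} = -\Sigma_v^{-1}(\partial\Sigma_v)\Sigma_v^{-1}$. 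These two expressions are exactly the matrices appearing inside the traces in \eqref{grad_W1}--\eqref{grad_b2}.

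The final step is a chain rule through the linear and exponential parametrizations. For the mean parameters this is immediate: since $\mu_v = W_v^\mu m_{\mathbf{s}} + b_v^\mu$, we have $\partial \mu_v / \partial (W_v^\mu)_{ij} = (m_{\mathbf{s}})_j e_i$ and $\partial \mu_v / \partial (b_v^\mu)_i = e_i$, so contracting with $\nabla_{\mu_v}\log\mathbb{P}$, summing over $v \in \mathbf{s}$, and multiplying by $\alpha_{\mathbf{v}}$ gives \eqref{grad_W1} and \eqref{grad_b1}. For the variance parameters, because $\sigma_v = \mathrm{diag}(\exp(W_v^\sigma m_{\mathbf{s}} + b_v^\sigma))$ only its $i$-th diagonal entry depends on row $i$ of $W_v^\sigma$, I would use $\partial (\Sigma_v)_{ii}/\partial (W_v^\sigma)_{ij} \propto (\sigma_v)_{ii}(m_{\mathbf{s}})_j$ (and analogously for $b_v^\sigma$), which after contracting with $\nabla_{\Sigma_v}\log\mathbb{P}$ produces the $-\tfrac{\alpha_{\mathbf{v}}}{2}$ prefactor, the $(\sigma_v)_{ii}$ weighting, and the $(m_{\mathbf{s}})_j$ factor of \eqref{grad_W2} and \eqref{grad_b2}.

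The main obstacle is this last differentiation through the variance branch: $\sigma_v$ enters the density squared, inside both a matrix inverse and a log-determinant, and is itself a diagonal matrix exponential of a linear map. Carefully tracking the diagonal selection (only entry $i$ is active for the $i$-th row of $W_v^\sigma$) and the exponential factor --- rather than losing it, as one would for a ``raw'' covariance parameter --- is what yields the precise $(\sigma_v)_{ii}$ scaling recorded in the statement; the mean gradients and the $b$ gradients are then routine specializations of the same computation.
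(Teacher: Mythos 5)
Your proposal is correct and follows essentially the same route as the paper's own (one-sentence) proof: differentiate the diagonal-Gaussian log-likelihood with respect to the mean and the (co)variance, then apply the chain rule through $\mu_v = W_v^\mu m_{\mathbf{s}} + b_v^\mu$ and $\sigma_v = \mathrm{diag}(\exp(W_v^\sigma m_{\mathbf{s}} + b_v^\sigma))$, with the additive separability of $\mathcal{L}$ justifying that only the visual factor matters. The one place to be careful is the variance branch, where the exact chain-rule factor is $\partial(\sigma_v^2)_{ii}/\partial(W_v^\sigma)_{ij} = 2(\sigma_v)_{ii}^2(m_{\mathbf{s}})_j$ rather than the looser ``$\propto (\sigma_v)_{ii}(m_{\mathbf{s}})_j$'' you wrote, but this is a bookkeeping constant that the paper's own trace notation also glosses over, not a gap in the argument.
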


\begin{proof}
The proof involves differentiating the log likelihood of a multivariate Gaussian with respect to $\mu$ and $\sigma$ before applying the chain rule to $\mu = W^\mu m_{\mathbf{s}} + b^\mu$ and $\sigma = \textrm{diag} \left( \exp \left( W^\sigma m_{\mathbf{s}} + b^\sigma \right) \right)$.
\end{proof}

\subsection{Baseline 2: Incorporating Bimodal and Trimodal Interactions}

So far, we have assumed the utterance segment $s$ can be independently factorized into unimodal features. In this subsection, we extend the setting to take account for bimodal and trimodal interactions. We adopt the idea of early-fusion~\citep{srivastava2012multimodal}, which means the bimodal and trimodal interactions are captured by the concatenated features from different modalities. Specifically, we define our factorized model as:
\begin{align}
\nonumber &\mathbb{P} [\mathbf{s}|m_{\mathbf{s}}] = \mathbb{P} [\mathbf{w}|m_{\mathbf{s}}]^{\alpha_{\mathbf{w}}} \mathbb{P} [\mathbf{v}|m_{\mathbf{s}}]^{\alpha_{\mathbf{v}}} \mathbb{P} [\mathbf{a}|m_{\mathbf{s}}]^{\alpha_{\mathbf{a}}} \\
\nonumber & \mathbb{P} [\left(\mathbf{w} \oplus \mathbf{v}\right)|m_{\mathbf{s}}]^{\alpha_{\mathbf{wv}}} \mathbb{P} [\left(\mathbf{w} \oplus \mathbf{a}\right)|m_{\mathbf{s}}]^{\alpha_{\mathbf{wa}}} \\
& \mathbb{P} [\left(\mathbf{v} \oplus \mathbf{a} \right)|m_{\mathbf{s}}]^{\alpha_{\mathbf{va}}} \mathbb{P} [\left(\mathbf{w} \oplus \mathbf{v} \oplus \mathbf{a} \right)|m_{\mathbf{s}}]^{\alpha_{\mathbf{wva}}},
\end{align}
where $\oplus$ denotes vector concatenation for bimodal and trimodal features. Each of the individual probabilities factorize in the same way as Equation~\eqref{factorize} (i.e. $ \mathbb{P} [\mathbf{a}|m_{\mathbf{s}}]^{\alpha_{\mathbf{a}}} = \prod_{a \in \mathbf{a}} \mathbb{P} [a|m_{\mathbf{s}}]^{\alpha_{\mathbf{a}}}$). Similar to baseline 1, we assume a log-linear likelihood~\eqref{text} for $\mathbb{P} [{w}|m_{\mathbf{s}}]$ and a Gaussian likelihood~\eqref{gaussian} for all remaining terms. We call this Multimodal Baseline 2 (\textbf{MMB2}).

\subsection{Optimization for Baseline 2}

The optimization algorithm derived in section~\ref{optimization} can be easily extended to learn $m_{\mathbf{s}}$, $W$ and $b$ in Baseline 2. We again alternate between the 2 steps of 1) solving for $m_{\mathbf{s}}$ given the parameters $W$ and $b$ at the current iterate, and 2) given $m_{\mathbf{s}}$, updating $W$ and $b$ using a gradient-based algorithm. 

\noindent \textbf{Solving for $m_{\mathbf{s}}$:} We state a result that derives the closed-form of $m_{\mathbf{s}}$ given $W$ and $b$:
\begin{corollary}
\label{mle2}
[Solving for $m_{\mathbf{s}}$] Assume that the optimal $m_{\mathbf{s}}$ lies on the unit sphere (i.e. $\| m_{\mathbf{s}} \|_2^2 = 1$). The closed-form (in Algorithm~\ref{sb1} line 4) for $m_{\mathbf{s}}$ is:
\begin{align}
& \nonumber m_{\mathbf{s}}^* = \sum_{w \in \mathbf{w}} \psi_w w \\
\nonumber &+ \sum_{v \in \mathbf{v}} \left( W_v^{\mu\top} \tilde{v}^{(1)} \psi_v^{(1)} + W_v^{\sigma\top} \tilde{v}^{(2)} \psi_v^{(2)} \right) \\
\nonumber &+ \sum_{a \in \mathbf{a}} \left( W_a^{\mu\top} \tilde{a}^{(1)} \psi_a^{(1)} + W_a^{\sigma\top} \tilde{a}^{(2)} \psi_a^{(2)} \right) \\
&+ \sum_{\substack{\mathbf{f} \in \{ \mathbf{w} \oplus \mathbf{v}, \mathbf{w} \oplus \mathbf{a}, \\ \mathbf{v} \oplus \mathbf{a}, \mathbf{w} \oplus \mathbf{v} \oplus \mathbf{a} \} }} \sum_{f \in \mathbf{f}} \left( W_f^{\mu\top} \tilde{f}^{(1)} \psi_{f}^{(1)} + W_f^{\sigma\top} \tilde{f}^{(2)} \psi_{f}^{(2)} \right)
\end{align}
where the shifted (and squared) visual features are:
\begin{align}
\tilde{v}^{(1)} = v-b_v^\mu, \ \tilde{v}^{(2)} = (v-b_v^\mu) \otimes (v-b_v^\mu),
\end{align}
(and analogously for $\tilde{f}^{(1)}, \tilde{f}^{(2)}, f \in \{ a,w\oplus v,w\oplus a,v\oplus a,w\oplus v\oplus a\}$). The weights $\psi$'s are:
\begin{align}
\psi_w &= \frac{{\alpha_\mathbf{w}} (1-\alpha)/(\alpha Z)}{p(w)+(1-\alpha)/(\alpha Z)}, \\
\psi_v^{(1)} &= \textup{diag} \left( \frac{\alpha_{\mathbf{v}}}{\exp \left(2 b_v^\sigma \right)} \right), \\
\psi_v^{(2)} &= \textup{diag} \left(\frac{\alpha_{\mathbf{v}}}{\exp \left(2b^\sigma_v \right)}-\alpha_{\mathbf{v}}\right).
\end{align}
(and analogously for $\psi_{f}^{(1)}, \psi_{f}^{(2)}, f \in \{ a,w\oplus v,w\oplus a,v\oplus a,w\oplus v\oplus a\}$).
\end{corollary}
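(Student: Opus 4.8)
The plan is to mirror the proof of Theorem~\ref{mle} and to exploit the fact that the log-likelihood of MMB2, like that of MMB1, is an additively separable function of the contributions from each factor. First I would take the logarithm of the factorized likelihood defining MMB2, turning the product of the seven factors (three unimodal, three bimodal, one trimodal) into a weighted sum:
\begin{align}
\nonumber \log \mathbb{P}[\mathbf{s}|m_{\mathbf{s}}] &= \alpha_{\mathbf{w}} \sum_{w \in \mathbf{w}} \log \mathbb{P}[w|m_{\mathbf{s}}] + \alpha_{\mathbf{v}} \sum_{v \in \mathbf{v}} \log \mathbb{P}[v|m_{\mathbf{s}}] \\
\nonumber &+ \alpha_{\mathbf{a}} \sum_{a \in \mathbf{a}} \log \mathbb{P}[a|m_{\mathbf{s}}] \\
&+ \sum_{\mathbf{f}} \alpha_{\mathbf{f}} \sum_{f \in \mathbf{f}} \log \mathbb{P}[f|m_{\mathbf{s}}],
\end{align}
where $\mathbf{f}$ ranges over the four concatenated feature sets $\mathbf{w}\oplus\mathbf{v}$, $\mathbf{w}\oplus\mathbf{a}$, $\mathbf{v}\oplus\mathbf{a}$, $\mathbf{w}\oplus\mathbf{v}\oplus\mathbf{a}$. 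Since $\nabla_{m_{\mathbf{s}}}$ is linear, the gradient of the objective is the sum of the per-factor gradients, so I can treat each factor in isolation and add the results.

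Next I would reuse the two gradient computations already established in the proof of Theorem~\ref{mle}. The unimodal word factor, modeled by the log-linear likelihood~\eqref{text}, contributes exactly $\sum_{w \in \mathbf{w}} \psi_w w$ after the Taylor-expansion linearization $\log \mathbb{P}[w|m_{\mathbf{s}}] \approx c + \langle m_{\mathbf{s}}, g \rangle$ together with the self-normalizing assumption $Z_{m_{\mathbf{s}}} \approx Z$; note this partition-function assumption is needed only for this single log-linear term. The visual and acoustic factors, modeled as isotropic Gaussians~\eqref{gaussian}, each contribute the terms $W^{\mu\top} \tilde{v}^{(1)} \psi_v^{(1)} + W^{\sigma\top} \tilde{v}^{(2)} \psi_v^{(2)}$ (and analogously for $a$). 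The crucial observation is that in MMB2 every bimodal and trimodal factor is \emph{also} assigned a Gaussian likelihood with its own linear-transformation parameters $W_f^\mu, W_f^\sigma, b_f^\mu, b_f^\sigma$; hence each $\mathbf{f}$ is formally identical to an additional ``modality,'' and its gradient takes the same form as the visual or acoustic gradient, differing only in the dimensionality of $f$ and in its parameters. This yields the shifted features $\tilde{f}^{(1)}, \tilde{f}^{(2)}$ and weights $\psi_f^{(1)}, \psi_f^{(2)}$ by direct analogy, with no new derivation.

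Finally I would combine the per-factor gradients into $g = \nabla_{m_{\mathbf{s}}} \log \mathbb{P}[\mathbf{s}|m_{\mathbf{s}}]$ and maximize the linearized objective $c + \langle m_{\mathbf{s}}, g \rangle$ subject to the unit-sphere constraint $\| m_{\mathbf{s}} \|_2^2 = 1$. As in Theorem~\ref{mle}, the constrained maximizer is proportional to $g$, so after dropping the normalization constant $m_{\mathbf{s}}^*$ is precisely the stated sum of the unimodal, bimodal and trimodal contributions.

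I expect the only real subtlety — rather than a genuine obstacle — to be verifying that the Taylor-expansion linearization used for each Gaussian factor in Theorem~\ref{mle} continues to hold uniformly across the higher-dimensional concatenated features, so that the additive decomposition of the gradient is valid term by term. Once this is granted, the corollary follows from linearity of the gradient and the ``plug-in new modality'' argument, reducing it to a restatement of Theorem~\ref{mle} with additional Gaussian factors.
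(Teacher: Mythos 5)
Your proposal is correct and follows essentially the same route as the paper, whose proof of Corollary~\ref{mle2} is simply the remark that it is a symmetric extension of Theorem~\ref{mle} treating each concatenated bimodal and trimodal feature as an additional Gaussian-likelihood ``modality.'' Your fleshed-out version --- additive separability of the log-likelihood, per-factor gradient reuse, and maximization of the linearized objective on the unit sphere --- is exactly the intended argument, just stated in more detail than the paper bothers to give.
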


\begin{proof}
The proof is a symmetric extension of Theorem~\ref{mle} to take into account the Gaussian likelihoods for bimodal and trimodal features.
\end{proof}

\noindent \textbf{Updating $W$ and $b$:} The gradient equations for updating $W$ and $b$ are identical to those derived in Proposition~\ref{gradient}, Equations (\ref{grad_W1}-\ref{grad_b2}).

\subsection{Multimodal Prediction}

Given the optimal embeddings $m_{\mathbf{s}}$, we can now train a classifier from $m_{\mathbf{s}}$ to labels $y$ for multimodal prediction. $m_{\mathbf{s}}$ can also be fine-tuned on labeled data (i.e. taking gradient descent steps to update $m_{\mathbf{s}}$ with respect to the task-specific loss functions) to learn task-specific multimodal utterance representations. In our experiments, we use a fully connected neural network for our classifier.
\section{Experimental Setup}

To evaluate the generalization of our models, we perform experiments on multimodal speaker traits recognition and multimodal sentiment analysis. The code for our experiments is released at \url{https://github.com/yaochie/multimodal-baselines}, and all datasets for our experiments can be downloaded at \url{https://github.com/A2Zadeh/CMU-MultimodalSDK}.

\subsection{Datasets}

All datasets consist of monologue videos where the speaker's intentions are conveyed through the language, visual and acoustic modalities. The multimodal features are described in the next subsection.

\noindent \textbf{Multimodal Speaker Traits Recognition} involves recognizing speaker traits based on multimodal utterances. \textbf{POM}~\cite{Park:2014:CAP:2663204.2663260} contains 903 videos each annotated for speaker traits: confident (con), voice pleasant (voi), dominance (dom), vivid (viv), reserved (res), trusting (tru), relaxed (rel), outgoing (out), thorough (tho), nervous (ner), and humorous (hum). The abbreviations (inside parentheses) are used in the tables.

\begin{figure}[t!]
\centering
\includegraphics[width=1.05\linewidth]{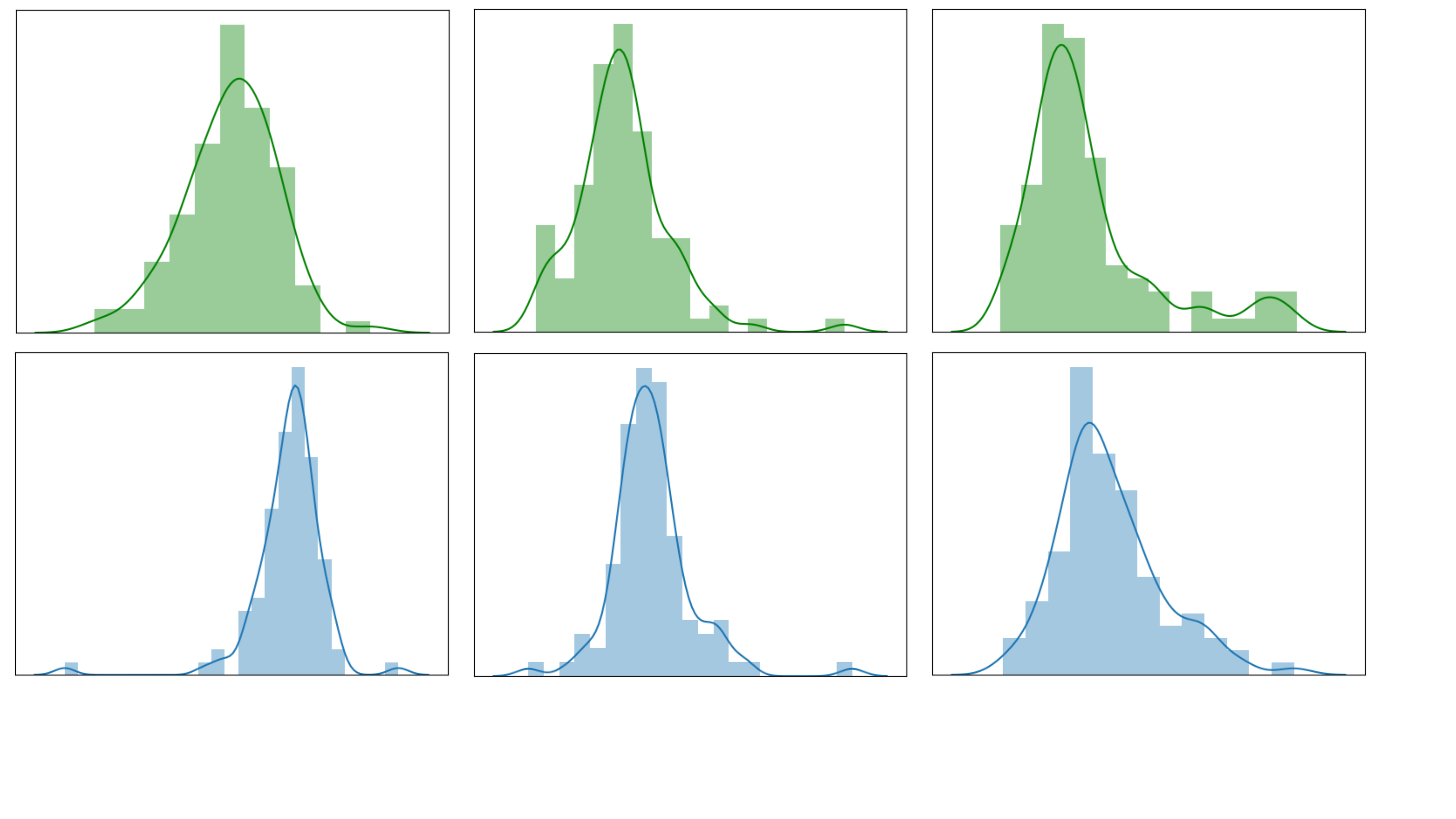}
\vspace{-10mm}
\caption{Histogram visualizations of the visual (top) and acoustic (bottom) features in some CMU-MOSI multimodal utterances. Many of the features converge to a Gaussian distribution across the time steps in the utterance, justifying our parametrization for the visual and acoustic likelihood functions.}
\vspace{-4mm}
\label{vis}
\end{figure}

\newcolumntype{K}[1]{>{\centering\arraybackslash}p{#1}}

\begin{table*}[!htb]
\begin{minipage}{.75\linewidth}
\fontsize{7}{10}\selectfont
\setlength\tabcolsep{3.3pt}
\begin{tabular}{l : *{16}{K{0.7cm}}}
\Xhline{3\arrayrulewidth}
Dataset & \multicolumn{16}{c}{\textbf{POM Personality Trait Recognition, measured in} MAE} \\
Task & \multicolumn{1}{c}{Con} & \multicolumn{1}{c}{Voi} & \multicolumn{1}{c}{Dom} & \multicolumn{1}{c}{Viv} & \multicolumn{1}{c}{Res} & \multicolumn{1}{c}{Tru} & \multicolumn{1}{c}{Rel} & \multicolumn{1}{c}{Out} & \multicolumn{1}{c}{Tho} & \multicolumn{1}{c}{Ner} & \multicolumn{1}{c}{Hum}\\
\Xhline{0.5\arrayrulewidth}
Majority & 1.483&1.089&1.167&1.158&1.166&0.743&0.753&0.872&0.939&1.181&1.774 \\
SVM	& 1.071&0.938&0.865&1.043&0.877&0.536&0.594&0.702&0.728&0.714&0.801\\
DF	& 1.033&0.899&0.870&0.997&0.884&0.534&0.591&0.698&0.732&0.695&0.768 \\
EF-LSTM$^{(\star)}$ & 1.035&0.911&0.880&0.981&0.872&0.556&0.594&0.700&0.712&0.706&0.762\\
MV-LSTM	& 1.029&0.971&0.944&0.976&0.877&0.523&0.625&0.703&0.792&\textbf{0.687}&0.770 \\
BC-LSTM	& 1.016&0.914&\textbf{0.859}&\textbf{0.905}&0.888&0.564&0.630&0.708&\textbf{0.680}&0.705&0.767	 \\
TFN  & 1.049&0.927&0.864&1.000&0.900&0.572&0.621&0.706&0.743&0.727&0.770\\
MFN  & \textbf{0.952}&\textbf{0.882}&\textbf{0.835}&\textbf{0.908}&\textbf{0.821}&\textbf{0.521}&\textbf{0.566}&\textbf{0.679}&\textbf{0.665}&\textbf{0.654}&\textbf{0.727} \\
\Xhline{0.5\arrayrulewidth}
\textbf{MMB2} & \textbf{1.015} & \textbf{0.878} & 0.885 & 0.967 & \textbf{0.857} & \textbf{0.522} & \textbf{0.578} & \textbf{0.685} & 0.705 & 0.692 & \textbf{0.726} \\
\Xhline{3\arrayrulewidth}
\end{tabular}

\vspace{2mm}

\fontsize{7}{10}\selectfont
\setlength\tabcolsep{3.3pt}
\begin{tabular}{l : *{16}{K{0.7cm}}}
\Xhline{3\arrayrulewidth}
Dataset & \multicolumn{16}{c}{\textbf{POM Personality Trait Recognition, measured in $r$}} \\
Task & \multicolumn{1}{c}{Con} & \multicolumn{1}{c}{Voi} & \multicolumn{1}{c}{Dom} & \multicolumn{1}{c}{Viv} & \multicolumn{1}{c}{Res} & \multicolumn{1}{c}{Tru} & \multicolumn{1}{c}{Rel} & \multicolumn{1}{c}{Out} & \multicolumn{1}{c}{Tho} & \multicolumn{1}{c}{Ner} & \multicolumn{1}{c}{Hum}\\
\Xhline{0.5\arrayrulewidth}
Majority & -0.041&-0.104&-0.031&-0.044&0.006&-0.077&-0.024&-0.085&-0.130&0.097&-0.069 \\
SVM	& 0.063&-0.004&0.141&0.076&0.134&0.168&0.104&0.066&0.134&0.068&0.147\\
DF	& 0.240&0.017&0.139&0.173&0.118&0.143&0.019&0.093&0.041&0.136&0.259 \\
EF-LSTM$^{(\star)}$ & 0.221&0.042&0.151&0.239&0.268&0.069&0.092&0.215&0.252&0.159&0.272 \\
MV-LSTM	& 0.358&0.131&0.146&0.347&0.323&\textbf{0.237}&0.119&0.238&0.284&0.258&0.317 \\
BC-LSTM		& \textbf{0.359}&0.081&0.234&0.417&0.450&0.109&0.075&0.078&0.363&0.184&0.319\\
TFN &  0.089&0.030&0.020&0.204&-0.051&-0.064&0.114&0.060&0.048&-0.002&0.213 \\
MFN  &  \textbf{0.395}&\textbf{0.193}&\textbf{0.313}&\textbf{0.431}&\textbf{0.333}&\textbf{0.296}&\textbf{0.255}&\textbf{0.259}&\textbf{0.381}&\textbf{0.318}&\textbf{0.386}\\
\Xhline{0.5\arrayrulewidth}
\textbf{MMB2} & 0.350 & \textbf{0.220} & \textbf{0.333} & \textbf{0.434} & \textbf{0.332} & 0.176 & \textbf{0.224} & \textbf{0.318} & \textbf{0.394} & \textbf{0.296} & \textbf{0.366} \\
\Xhline{3\arrayrulewidth}
\end{tabular}
\end{minipage}
\begin{minipage}{.1\linewidth}
\fontsize{7}{10}\selectfont
\vspace{-14.0mm}
\setlength\tabcolsep{1.1pt}
\begin{tabular}{l : *{16}{K{0.9cm}}}
\Xhline{3\arrayrulewidth}
Dataset & \multicolumn{2}{c}{\textbf{CMU-MOSI}} \\
Task & \multicolumn{2}{c}{\textbf{Sentiment}} \\
Metric       & A$({2})$ & F1 \\
\Xhline{0.5\arrayrulewidth}
Majority       & 50.2 &   50.1   \\
RF         & 56.4 &  56.3   \\
THMM		& 50.7	& 45.4	\\
EF-HCRF$^{(\star)}$	& 65.3 & 65.4 \\
MV-HCRF$^{(\star)}$	& 65.6 & 65.7 \\
SVM-MD     & 71.6 &   72.3   \\
C-MKL          & 72.3 &   72.0   \\
DF              & 72.3 &   72.1   \\
SAL-CNN    & 73.0 &   72.6     \\
EF-LSTM$^{(\star)}$		& 74.3 &   74.3  \\
MV-LSTM			& 73.9 &   74.0   \\
BC-LSTM         & 73.9 &   73.9   \\
TFN             & 74.6 &   74.5   \\
MFN & \textbf{77.4}	 & \textbf{77.3} \\
\Xhline{0.5\arrayrulewidth}
\textbf{MMB1}     & 73.6 & 73.4 \\
\textbf{MMB2}     & \textbf{75.2} & \textbf{75.1} \\
\Xhline{3\arrayrulewidth}
\end{tabular}
\end{minipage}
\caption{Results for multimodal personality trait recognition on POM (left) and multimodal sentiment analysis on CMU-MOSI (right). EF-LSTM$^{(\star)}$and HCRF$^{(\star)}$ denote the best result obtained from the LSTM and HCRF variants respectively. The top two results are highlighted in bold. Our proposed baseline model (MMB2), despite its simplicity, often ranks in the top two models and outperforms many large neural models such as C-MKL, DF, SAL-CNN, EF-LSTM, MV-LSTM, BC-LSTM, TFN, and MFN.}
\label{all}
\end{table*}

\noindent \textbf{Multimodal Sentiment Analysis} involves analyzing speaker sentiment based on video content. Multimodal sentiment analysis extends conventional language-based sentiment analysis to a multimodal setup where both verbal and non-verbal signals contribute to the expression of sentiment. We use \textbf{CMU-MOSI}~\cite{zadeh2016multimodal} which consists of 2199 opinion segments from online videos each annotated with sentiment from strongly negative $(-3)$ to strongly positive $(+3)$.

\subsection{Multimodal Features and Alignment}
GloVe word embeddings~\cite{pennington2014glove}, Facet~\cite{emotient} and COVAREP~\cite{degottex2014covarep} are extracted for the language, visual and acoustic modalities respectively\footnote{Details on feature extraction are in supplementary.}. Forced alignment is performed using P2FA~\cite{P2FA} to obtain the exact utterance times of each word. The video and audio features are aligned by computing the expectation of their features over each word interval~\cite{multistage}.

\subsection{Evaluation Metrics}
For classification, we report multiclass classification accuracy A$({c})$ where $c$ denotes the number of classes and F1 score. For regression, we report Mean Absolute Error (MAE) and Pearson's correlation ($r$). For MAE lower values indicate better performance. For all remaining metrics, higher values indicate better performance.

\section{Results and Discussion}

\subsection{Gaussian Likelihood Assumption}
\label{gaussian_vis}

Before proceeding to the experimental results, we perform some sanity checks on our modeling assumptions. We plotted histograms of the visual and acoustic features in CMU-MOSI utterances to visually determine if they resemble a Gaussian distribution. From the plots in Figure~\ref{vis}, we observe that many of the features indeed converge approximately to a Gaussian distribution across the time steps in the utterance, justifying the parametrization for the visual and acoustic likelihood functions in our model.

\subsection{Supervised Learning}

Our first set of experiments evaluates the performance of our baselines on two multimodal prediction tasks: multimodal sentiment analysis on CMU-MOSI and multimodal speaker traits recognition on POM. On CMU-MOSI (right side of Table~\ref{all}), our model \textbf{MMB2} performs competitively against many neural models including early fusion deep neural networks~\cite{nojavanasghari2016deep}, several variants of LSTMs (stacked, bidirectional etc.)~\citep{Hochreiter:1997:LSM:1246443.1246450,Schuster:1997:BRN:2198065.2205129}, Multi-view LSTMs~\cite{rajagopalan2016extending}, and tensor product recurrent models (TFN)~\cite{tensoremnlp17}. For multimodal personality traits recognition on POM (left side of Table~\ref{all}), our baseline is able to additionally outperform more complicated memory-based recurrent models such as MFN~\citep{zadeh2018memory} on several metrics. We view this as an impressive achievement considering the simplicity of our model and the significantly fewer parameters that our model contains. As we will later show, our model's strong performance comes with the additional benefit of being significantly faster than the existing models.

\subsection{Semi-supervised Learning}

\newcolumntype{K}[1]{>{\centering\arraybackslash}p{#1}}
\definecolor{gg}{RGB}{45,190,45}

\begin{table}[t!]
\fontsize{7.5}{10}\selectfont
\centering
\setlength\tabcolsep{5.0pt}
\begin{tabular}{c : l : *{16}{K{1.05cm}}}
\Xhline{3\arrayrulewidth}
\multirow{ 3}{*}{\% Labels} & Dataset & \multicolumn{2}{c}{\textbf{CMU-MOSI}} \\
& Task & \multicolumn{2}{c}{\textbf{Sentiment}} \\
& Metric       & A$({2})$ & F1 \\
\Xhline{0.5\arrayrulewidth}
\multirow{ 3}{*}{40\%}  & AE    & 55.4 & 54.7 \\
& seq2seq               & 56.4 & 49.3 \\
& \textbf{MMB2}         & \textbf{72.9} & \textbf{72.8} \\
\Xhline{0.5\arrayrulewidth}
\multirow{ 3}{*}{60\%}  & AE    & 55.2 & 54.2 \\
& seq2seq               & 56.3 & 51.5 \\
& \textbf{MMB2}         & \textbf{73.6} & \textbf{73.5} \\
\Xhline{0.5\arrayrulewidth}
\multirow{ 3}{*}{80\%}  & AE    & 55.2 & 54.8 \\
& seq2seq               & 55.7 & 54.7 \\
& \textbf{MMB2}         & \textbf{74.1} & \textbf{74.1} \\
\Xhline{0.5\arrayrulewidth}
\multirow{ 3}{*}{100\%} & AE    & 55.2 & 53.2 \\
& seq2seq               & 57.0 & 54.1 \\
& \textbf{MMB2}         & \textbf{75.1} & \textbf{75.1} \\
\Xhline{3\arrayrulewidth}
\end{tabular}
\caption{Semi-supervised sentiment prediction results on CMU-MOSI. Our model outperforms deep autoencoders (AE) and their recurrent variant (seq2seq), remaining strong despite limited labeled data.}
\label{ssl}
\end{table}

Our next set of experiments evaluates the performance of our proposed baseline models when there is limited labeled data. Intuitively, we expect our model to have a lower sample complexity since training our model involves learning fewer parameters. As a result, we hypothesize that our model will generalize better when there is limited amounts of labeled data as compared to larger neural models with a greater number of parameters.

We test this hypothesis by evaluating the performance of our model on the CMU-MOSI dataset with only 40\%, 60\%, 80\%, and 100\% of the training labels. The remainder of the train set now consists of unlabeled data which is also used during training but in a semi-supervised fashion. We use the entire train set (both labeled and unlabeled data) for unsupervised learning of our multimodal embeddings before the embeddings are fine-tuned to predict the label using limited labeled data. A comparison is performed with two models that also learn embeddings from unlabeled multimodal utterances: 1) deep averaging autoencoder (\textbf{AE})~\citep{P15-1162,Hinton504} which averages the temporal dimension before using a fully connected autoencoder to learn a latent embedding, and 2) sequence to sequence autoencoder (\textbf{seq2seq})~\citep{sutskever2014sequence} which captures temporal information using a recurrent neural network encoder and decoder. For each of these models, an autoencoding model is used to learn embeddings on the entire training set (both labeled and unlabeled data) before the embeddings are fine-tuned to predict the label using limited labeled data. The validation and test sets remains unchanged for fair comparison.

Under this semi-supervised setting, we show prediction results on the CMU-MOSI test set in Table~\ref{ssl}. Empirically, we find that our model is able to outperform deep autoencoders and their recurrent variant. Our model remains strong and only suffers a drop in performance of about 3\% (75.1\% $\rightarrow$ 72.9\% binary accuracy) despite having access to only 40\% of the labeled training data.

\subsection{Inference Timing Comparisons}

To demonstrate another strength of our model, we compare the inference times of our model with existing baselines in Table~\ref{times}. Our model achieves an inference per second (IPS) of more than 10 times the closest neural model (EF-LSTM). We attribute this speedup to our (approximate) closed form solution for $m_\mathbf{s}$ as derived in Theorem~\ref{mle} and Corollary~\ref{mle2}, the small size of our model, as well as the fewer number of parameters (linear transformation parameters and classifier parameters) involved.

\begin{table}[t!]
\fontsize{7.5}{10}\selectfont
\centering
\setlength\tabcolsep{5.5pt}
\begin{tabular}{l : c c}
\Xhline{3\arrayrulewidth}
Method       & \textbf{Average Time (s)} & \textbf{Inferences Per Second (IPS)} \\ 
\Xhline{0.5\arrayrulewidth}
DF              & 0.305 & 1850 \\
EF-LSTM			& 0.022 & 31200\\
MV-LSTM			& 0.490 & 1400 \\
BC-LSTM         & 0.210 & 3270 \\
TFN             & 2.058 & 333  \\
MFN         	& 0.144 & 4760 \\
\Xhline{0.5\arrayrulewidth}
MMB1            & \textbf{0.00163} & \textbf{421000} \\
MMB2            & \textbf{0.00219} & \textbf{313000} \\
\Xhline{3\arrayrulewidth}
\end{tabular}
\caption{Average time taken for inference on the CMU-MOSI test set and Inferences Per Second (IPS) on a single Nvidia GeForce GTX 1080 Ti GPU, averaged over 5 trials. Our proposed baselines are more than 10 times faster than the closest neural model (EF-LSTM).}
\label{times}
\end{table}

\subsection{Ablation Study}

\newcolumntype{K}[1]{>{\centering\arraybackslash}p{#1}}
\definecolor{gg}{RGB}{15,125,15}
\definecolor{rr}{RGB}{190,45,45}

\begin{table}[t!]
\fontsize{7.5}{10}\selectfont
\centering
\setlength\tabcolsep{1.0pt}
\begin{tabular}{l : *{2}{K{0.5cm}} : *{16}{K{0.9cm}}}
\Xhline{3\arrayrulewidth}
Dataset & & & \multicolumn{2}{c}{\textbf{CMU-MOSI}} \\
Task & & & \multicolumn{2}{c}{\textbf{Sentiment}} \\
Model   & PE & FT & A$({2})$ & F1 \\
\Xhline{0.5\arrayrulewidth}
MMB2, language only & \textcolor{gg}\cmark & \textcolor{gg}\cmark & 72.3 & 73.7 \\
MMB2 & \textcolor{rr}\xmark & \textcolor{rr}\xmark & 74.1 & 73.9 \\ 
MMB2 & \textcolor{rr}\xmark & \textcolor{gg}\cmark & 74.6 & 74.6 \\
MMB2 & \textcolor{gg}\cmark & \textcolor{rr}\xmark & 74.6 & 74.6 \\
MMB2 & \textcolor{gg}\cmark & \textcolor{gg}\cmark & \textbf{75.2} & \textbf{75.1} \\
\Xhline{3\arrayrulewidth}
\end{tabular}
\caption{Ablation studies on CMU-MOSI test set. Incorporating nonverbal (visual and acoustic) features, positional encodings (PE), and task-specific fine tuning (FT) are important for good prediction performance.}
\label{ablation}
\end{table}

To further motivate our design decisions, we test some ablations of our model: 1) we remove the modeling capabilities of the visual and acoustic modalities, instead modeling only the language modality, 2) we remove the positional encodings, and 3) we remove the fine tuning step. We provide these results in Table~\ref{ablation} and observe that each component is indeed important for our model. Although the text only model performs decently, incorporating visual and acoustic features under our modeling assumption improves performance. Our results also demonstrate the effectiveness of positional encodings and fine tuning without having to incorporate any additional learnable parameters.
\section{Conclusion}

This paper proposed two simple but strong baselines to learn embeddings of multimodal utterances. The first baseline assumes a factorization of the utterance into unimodal factors conditioned on the joint embedding while the second baseline extends the first by assuming a factorization into unimodal, bimodal, and trimodal factors. Both proposed models retain simplicity and efficiency during both learning and inference. From experiments across multimodal tasks and datasets, we show that our proposed baseline models: 1) display competitive performance on supervised multimodal prediction, 2) outperform classical deep autoencoders for semi-supervised multimodal prediction and 3) attain significant (10 times) speedup during inference. Overall, we believe that our strong baseline models provide new benchmarks for future research in multimodal learning.

\section*{Acknowledgements}

PPL and LM were partially supported by Samsung and NSF (Award 1750439). Any opinions, findings, and conclusions or recommendations expressed in this material are those of the author(s) and do not necessarily reflect the views of Samsung and NSF, and no official endorsement should be inferred. YHT and RS were supported in part by the NSF IIS1763562, Office of Naval Research N000141812861, and Google focused award. We would also like to acknowledge NVIDIA's GPU support and the anonymous reviewers for their constructive comments on this paper.

\bibliography{naaclhlt2019}
\bibliographystyle{acl_natbib}

\clearpage

\appendix

\section{Appendix}

\subsection{Proof of Theorem 1}

We begin by restating the likelihood of a multimodal segment $\mathbf{s}$ under our model:
\begin{align}
& \ \ \ \ \mathbb{P} [\mathbf{s}|m_{\mathbf{s}}] \\
&= \mathbb{P} [\mathbf{w}|m_{\mathbf{s}}]^{\alpha_{\mathbf{w}}} \mathbb{P} [\mathbf{v}|m_{\mathbf{s}}]^{\alpha_{\mathbf{v}}} \mathbb{P} [\mathbf{a}|m_{\mathbf{s}}]^{\alpha_{\mathbf{a}}} \\
&= \prod_{w \in \mathbf{w}} \mathbb{P} [w|m_{\mathbf{s}}]^{\alpha_{\mathbf{w}}} \prod_{v \in \mathbf{v}} \mathbb{P} [v|m_{\mathbf{s}}]^{\alpha_{\mathbf{v}}} \prod_{a \in \mathbf{a}} \mathbb{P} [a|m_{\mathbf{s}}]^{\alpha_{\mathbf{a}}}
\end{align}

We define the objective function by the maximum likelihood estimator of the multimodal utterance embedding and the parameters. The estimator is obtained by solving the unknown variables that maximizes the log-likelihood of the observed multimodal utterance (i.e., $\mathbf{s}$):
\begin{align}
\label{objective_app}
& \mathcal{L} (m_{\mathbf{s}}, W^{}_{}, b^{}_{}; \mathbf{s}) = \log \mathbb{P} [\mathbf{s}|m_{\mathbf{s}}; W^{}_{}, b^{}_{}] \\
\nonumber &= \sum_{w \in \mathbf{w}} \log \mathbb{P} [w|m_{\mathbf{s}}]^{\alpha_{\mathbf{w}}} + \sum_{v \in \mathbf{v}} \log \mathbb{P} [v|m_{\mathbf{s}}]^{\alpha_{\mathbf{v}}} \\
&+ \sum_{a \in \mathbf{a}} \log \mathbb{P} [a|m_{\mathbf{s}}]^{\alpha_{\mathbf{a}}}
\end{align}
with $W$ and $b$ denoting all linear transformation parameters. Our goal is to solve for the optimal embedding $m_{\mathbf{s}}^* = \argmax_{m_{\mathbf{s}}^*} \mathcal{L} (m_{\mathbf{s}}, W^{}_{}, b^{}_{}; \mathbf{s})$. We will begin by simplifying each of the terms: $\log \left( \mathbb{P} [w|m_{\mathbf{s}}] \right)^{\alpha_{\mathbf{w}}}, \log \left( \mathbb{P} [v|m_{\mathbf{s}}] \right)^{\alpha_{\mathbf{v}}}$, and $\log \left( \mathbb{P} [a|m_{\mathbf{s}}] \right)^{\alpha_{\mathbf{a}}}$.

For the language features, we follow the approach in~\cite{arora2}. We define:
\begin{align}
& \ \ \ \ f_{w}(m_{\mathbf{s}}) \\
&= \log \mathbb{P} [w|m_{\mathbf{s}}]^{\alpha_{\mathbf{w}}} \\
&= {\alpha_{\mathbf{w}}} \log \mathbb{P} [w|m_{\mathbf{s}}] \\
&= {\alpha_{\mathbf{w}}} \log \left[ \alpha p(w) + (1-\alpha) \frac{\exp \left( \langle w , m_{\mathbf{s}} \rangle \right)}{Z_{m_{\mathbf{s}}}} \right]
\end{align}
By taking the gradient $\nabla_{m_{\mathbf{s}}} f_{w}(m_{\mathbf{s}})$ and making a Taylor approximation,
\begin{align}
& \ \ \ \ f_{w}(m_{\mathbf{s}}) \approx f_{w}(0) + \nabla_{m_{\mathbf{s}}} f_{w}(0)^\top m_{\mathbf{s}} \\
&= c + \frac{{\alpha_{\mathbf{w}}} (1-\alpha)/(\alpha Z)}{p(w)+(1-\alpha)/(\alpha Z)} \langle w, m_{\mathbf{s}} \rangle
\end{align}
For the visual features, we can decompose the likelihood $\mathbb{P} [v|m_{\mathbf{s}}]$ as a product of the likelihoods in each coordinate $\prod_{i=1}^{|v|} \mathbb{P} [v(i)|m_{\mathbf{s}}]$ since we assume a diagonal covariance matrix. Let $v(i) \in \mathbb{R}$ denote the $i$th visual feature and $W_v^\mu(i) \in \mathbb{R}^{|m_{\mathbf{s}}|}$ be the $i$-th column of $W_v^\mu$.
\begin{align}
\mu_v(i) &= W_v^\mu(i) m_{\mathbf{s}} + b_v^\mu(i) \\
\sigma_v(i) &= \exp \left( W_v^\sigma(i) m_{\mathbf{s}} + b_v^\sigma(i) \right) \\
v(i) | m_{\mathbf{s}} &\sim N(\mu_v(i), \sigma_v^2(i)) \\
\mathbb{P} [v(i)|m_{\mathbf{s}}] &= \frac{1}{\sqrt{2\pi}\sigma_v(i)} \exp \left( - \frac{(v(i)-\mu_v(i))^2}{2\sigma_v^2(i)} \right)
\end{align}

Define $f_{v(i)}(m_{\mathbf{s}})$ as follows:
\begin{align}
& \ \ \ \ f_{v(i)}(m_{\mathbf{s}}) \\
&= \log \mathbb{P} [v(i)|m_{\mathbf{s}}]^{\alpha_{\mathbf{v}}} \\
&= {\alpha_{\mathbf{v}}} \log \mathbb{P} [v(i)|m_{\mathbf{s}}] \\
&= - {\alpha_{\mathbf{v}}} \log \left( \sqrt{2\pi} \sigma_v(i) \right) - {\alpha_{\mathbf{v}}} \frac{(v(i)-\mu_v(i))^2}{2\sigma_v^2(i)} \\
\nonumber &= - {\alpha_{\mathbf{v}}} \log \left( \sqrt{2\pi} \exp \left( W_v^\sigma(i) m + b_v^\sigma(i) \right) \right) \\
& \ \ \ - {\alpha_{\mathbf{v}}} \frac{(v(i)-W_v^\mu(i) m - b_v^\mu(i))^2}{2\exp \left( W_v^\sigma(i) m_{\mathbf{s}} + b_v^\sigma(i) \right)^2} \\
\nonumber &= - {\alpha_{\mathbf{v}}} \log \sqrt{2\pi} - \left( W_v^\sigma(i) m_{\mathbf{s}} + b_v^\sigma(i) \right) \\
& \ \ \ - {\alpha_{\mathbf{v}}} \frac{(v(i)-W_v^\mu(i) m - b_v^\mu(i))^2}{2\exp \left( 2 W_v^\sigma(i) m_{\mathbf{s}} + 2 b_v^\sigma(i) \right)}
\end{align}

The gradient $\nabla_{m_{\mathbf{s}}} f_{v(i)}(m_{\mathbf{s}})$ is as follows
\begin{align}
& \ \ \ \ \nabla_{m_{\mathbf{s}}} f_{v(i)}(m_{\mathbf{s}}) \\
&= - {\alpha_{\mathbf{v}}}W_v^\sigma(i) - {\alpha_{\mathbf{v}}} \frac{1}{4\sigma_v(i)^4} \left[ 2 (v(i)-\mu_v(i)) \right] \\
\nonumber &= {\alpha_{\mathbf{v}}} \frac{\left[ (v(i)-\mu_v(i)) W_v^\mu(i) + (v(i)-\mu_v(i))^2 W_v^\sigma(i) \right]}{\sigma_v(i)^2} \\
& \ \ \ - {\alpha_{\mathbf{v}}} W_v^\sigma(i) \\
\nonumber &= {\alpha_{\mathbf{v}}} \frac{v(i)-\mu_v(i)}{\sigma_v(i)^2} W_v^\mu(i) \\
& \ \ \ + {\alpha_{\mathbf{v}}} \left(\frac{(v(i)-\mu_v(i))^2}{\sigma_v(i)^2}-1\right) W_v^\sigma(i)
\end{align}

By Taylor expansion, we have that
\begin{align}
& \ \ \ \ f_{v(i)}(m_{\mathbf{s}}) \\
&\approx f_{v(i)}(0) + \nabla_{m_{\mathbf{s}}} f_{v(i)}(0)^\top m_{\mathbf{s}} \\
\nonumber &= \underbrace{- {\alpha_{\mathbf{v}}}\log \left( \sqrt{2\pi} \exp \left( b_v^\sigma(i) \right) \right) - {\alpha_{\mathbf{v}}} \frac{(v(i) - b_v^\mu(i))^2}{2 \exp \left( 2 b_v^\sigma(i) \right)}}_\text{constant with respect to $m_{\mathbf{s}}$} \\
\nonumber & \ \ \ + {\alpha_{\mathbf{v}}} \frac{v(i) - b_v^\mu(i)}{\exp \left(2 b_v^\sigma(i) \right)} \langle W_v^\mu(i), m_{\mathbf{s}} \rangle \\
& \ \ \ + {\alpha_{\mathbf{v}}} \left(\frac{(v(i)-b_v^\mu(i))^2}{\exp \left(2b^\sigma_v(i) \right)}-1\right) \langle W_v^\sigma(i), m_{\mathbf{s}} \rangle \\
\nonumber &= c + {\alpha_{\mathbf{v}}} \frac{v(i) - b_v^\mu(i)}{\exp \left(2 b_v^\sigma(i) \right)} \langle W_v^\mu(i), m_{\mathbf{s}} \rangle \\
& \ \ \ + {\alpha_{\mathbf{v}}} \left(\frac{(v(i)-b_v^\mu(i))^2}{\exp \left(2b^\sigma_v(i) \right)}-1\right) \langle W_v^\sigma(i), m_{\mathbf{s}} \rangle
\end{align}

By our symmetric paramterization of the acoustic features, we have that:
\begin{align}
& \ \ \ \ f_{a(i)}(m_{\mathbf{s}}) \\
\nonumber &\approx c + {\alpha_{\mathbf{a}}} \frac{a(i) - b_a^\mu(i)}{\exp \left(2 b_a^\sigma(i) \right)} \langle W_a^\mu(i), m_{\mathbf{s}} \rangle \\
& \ \ \ + {\alpha_{\mathbf{a}}} \left(\frac{(a(i)-b_a^\mu(i))^2}{\exp \left(2b^\sigma_a(i) \right)}-1\right) \langle W_a^\sigma(i), m_{\mathbf{s}} \rangle
\end{align}

Rewriting this in matrix form, we obtain that
\begin{align}
f_{w}(m_{\mathbf{s}}) = c + \psi_w \langle w, m_{\mathbf{s}} \rangle
\end{align}
\begin{align}
f_{v}(m_{\mathbf{s}}) &= \sum_{i \in |v|} f_{v(i)}(m_{\mathbf{s}}) \\
\nonumber &= c + \left\langle W_v^{\mu\top} (v-b_v^\mu) \psi_v^{(1)}, m_{\mathbf{s}} \right\rangle \\
& \ \ \ \ + \left\langle W_v^{\sigma\top} (v-b_v^\mu) \otimes (v-b_v^\mu) \psi_v^{(2)}, m_{\mathbf{s}} \right\rangle
\end{align}
\begin{align}
f_{a}(m_{\mathbf{s}}) &= \sum_{i \in |a|} f_{a(i)}(m_{\mathbf{s}}) \\
\nonumber &= c + \left\langle W_a^{\mu\top} (a-b_a^\mu) \psi_a^{(1)}, m_{\mathbf{s}} \right \rangle \\
& \ \ \ \ + \left\langle W_a^{\sigma\top} (a-b_a^\mu) \otimes (a-b_a^\mu) \psi_a^{(2)}, m_{\mathbf{s}} \right\rangle
\end{align}
where $\otimes$ denotes Hadamard (element-wise) product and the weights $\psi$'s are given as follows:
\begin{align}
\psi_w &= \frac{{\alpha_\mathbf{w}} (1-\alpha)/(\alpha Z)}{p(w)+(1-\alpha)/(\alpha Z)} \\
\psi_v^{(1)} &= \textrm{diag} \left( \frac{\alpha_{\mathbf{v}}}{\exp \left(2 b_v^\sigma \right)} \right) \\
\psi_v^{(2)} &= \textrm{diag} \left(\frac{\alpha_{\mathbf{v}}}{\exp \left(2b^\sigma_v \right)}-\alpha_{\mathbf{v}}\right) \\
\psi_a^{(1)} &= \textrm{diag} \left( \frac{\alpha_{\mathbf{a}}}{\exp \left(2 b_a^\sigma \right)} \right) \\
\psi_a^{(2)} &= \textrm{diag} \left(\frac{\alpha_{\mathbf{a}}}{\exp \left(2b^\sigma_a \right)}-\alpha_{\mathbf{a}}\right)
\end{align}
Observe that $W_v^{\sigma\top} (v-b_v^\mu)$ is a composition of a shift $-b_v^\mu$ and a linear transformation $W_v^{\sigma\top}$ of the visual features into the multimodal embedding space. Note that $\mathbb{E} [v|m_{\mathbf{s}}] = b_v^\mu$. In other words, this shifts the visual features towards 0 in expectation before transforming them into the multimodal embedding space. Our choice of a Gaussian likelihood for the visual and acoustic features introduces a squared term $W_v^{\sigma\top} (v-b_v^\mu) \otimes (v-b_v^\mu)$ to account for the $\ell_2$ distance present in the Gaussian pdf. Secondly, regarding the weights $\psi$'s, note that: 1) the weights for a modality are proportional to the global hyper-parameters $\alpha$ assigned to that modality, and 2) the weights $\psi_w$ are inversely proportional to $p(w)$ (rare words carry more weight). The weights $\psi_v$'s and $\psi_a$'s scales each feature dimension inversely by their magnitude.

Finally, we know that our objective function~\eqref{objective_app} decomposes as
\begin{align}
\nonumber & \ \ \ \ \mathcal{L} (m_{\mathbf{s}}, W^{}_{}, b^{}_{}; \mathbf{s}) \\
&= \sum_{w \in \mathbf{w}} f_{w}(m_{\mathbf{s}}) + \sum_{v \in \mathbf{v}} f_{v}(m_{\mathbf{s}}) + \sum_{a \in \mathbf{a}} f_{a}(m_{\mathbf{s}})
\end{align}

We now use the fact that $\max_{x: \|x\|_2^2 =1} \textrm{constant } + \langle x, g \rangle = g / \| g \|$. If we assume that $m_{\mathbf{s}}^*$ lies on the unit sphere, the maximum likelihood estimate for $m_{\mathbf{s}}$ is approximately:
\begin{align}
&\nonumber \ \ \ \ m_{\mathbf{s}}^* \\
\nonumber &= \sum_{w \in \mathbf{w}} \psi_w w + \sum_{v \in \mathbf{v}} \left( W_v^{\mu\top} \tilde{v}^{(1)} \psi_v^{(1)} + W_v^{\sigma\top} \tilde{v}^{(2)} \psi_v^{(2)} \right)\\
&+ \sum_{a \in \mathbf{a}} \left( W_a^{\mu\top} \tilde{a}^{(1)} \psi_a^{(1)} + W_a^{\sigma\top} \tilde{a}^{(2)} \psi_a^{(2)} \right).
\end{align}
where we have rewritten the shifted (and squared) visual and acoustic terms as
\begin{align}
\tilde{v}^{(1)} &= v-b_v^\mu \\
\tilde{v}^{(2)} &= (v-b_v^\mu) \otimes (v-b_v^\mu) \\
\tilde{a}^{(1)} &= a-b_a^\mu \\
\tilde{a}^{(2)} &= (a-b_a^\mu) \otimes (a-b_a^\mu)
\end{align}
which concludes the proof.

\subsection{Multimodal Features}

Here we present extra details on feature extraction for the language, visual and acoustic modalities.

\noindent \textbf{Language:} We used 300 dimensional GloVe word embeddings trained on 840 billion tokens from the Common Crawl dataset \cite{pennington2014glove}. These word embeddings were used to embed a sequence of individual words from video segment transcripts into a sequence of word vectors that represent spoken text. 

\noindent \textbf{Visual:} The library Facet \cite{emotient} is used to extract a set of visual features including facial action units, facial landmarks, head pose, gaze tracking and HOG features \cite{zhu2006fast}. These visual features are extracted from the full video segment at 30Hz to form a sequence of facial gesture measures throughout time.

\noindent \textbf{Acoustic:} The software COVAREP \cite{degottex2014covarep} is used to extract acoustic features including 12 Mel-frequency cepstral coefficients, pitch tracking and voiced/unvoiced segmenting features \cite{drugman2011joint}, glottal source parameters \cite{childers1991vocal,drugman2012detection,alku1992glottal,alku1997parabolic,alku2002normalized}, peak slope parameters and maxima dispersion quotients \cite{kane2013wavelet}. These visual features are extracted from the full audio clip of each segment at 100Hz to form a sequence that represent variations in tone of voice over an audio segment.

\subsection{Multimodal Alignment}
We perform forced alignment using P2FA \cite{P2FA} to obtain the exact utterance time-stamp of each word. This allows us to align the three modalities together. Since words are considered the basic units of language we use the interval duration of each word utterance as one time-step. We acquire the aligned video and audio features by computing the expectation of their modality feature values over the word utterance time interval \cite{zadeh2018memory}.

\end{document}


\maketitle

\section{Appendix}

\subsection{Proof of Theorem 1}

We begin by restating the likelihood of a multimodal segment $\mathbf{s}$ under our model:
\begin{align}
& \ \ \ \ \mathbb{P} [\mathbf{s}|m_{\mathbf{s}}] \\
&= \mathbb{P} [\mathbf{w}|m_{\mathbf{s}}]^{\alpha_{\mathbf{w}}} \mathbb{P} [\mathbf{v}|m_{\mathbf{s}}]^{\alpha_{\mathbf{v}}} \mathbb{P} [\mathbf{a}|m_{\mathbf{s}}]^{\alpha_{\mathbf{a}}} \\
&= \prod_{w \in \mathbf{w}} \mathbb{P} [w|m_{\mathbf{s}}]^{\alpha_{\mathbf{w}}} \prod_{v \in \mathbf{v}} \mathbb{P} [v|m_{\mathbf{s}}]^{\alpha_{\mathbf{v}}} \prod_{a \in \mathbf{a}} \mathbb{P} [a|m_{\mathbf{s}}]^{\alpha_{\mathbf{a}}}
\end{align}

We define the objective function by the maximum likelihood estimator of the multimodal utterance embedding and the parameters. The estimator is obtained by solving the unknown variables that maximizes the log-likelihood of the observed multimodal utterance (i.e., $\mathbf{s}$):
\begin{align}
\label{objective_app}
& \mathcal{L} (m_{\mathbf{s}}, W^{}_{}, b^{}_{}; \mathbf{s}) = \log \mathbb{P} [\mathbf{s}|m_{\mathbf{s}}; W^{}_{}, b^{}_{}] \\
\nonumber &= \sum_{w \in \mathbf{w}} \log \mathbb{P} [w|m_{\mathbf{s}}]^{\alpha_{\mathbf{w}}} + \sum_{v \in \mathbf{v}} \log \mathbb{P} [v|m_{\mathbf{s}}]^{\alpha_{\mathbf{v}}} \\
&+ \sum_{a \in \mathbf{a}} \log \mathbb{P} [a|m_{\mathbf{s}}]^{\alpha_{\mathbf{a}}}
\end{align}
with $W$ and $b$ denoting all linear transformation parameters. Our goal is to solve for the optimal embedding $m_{\mathbf{s}}^* = \argmax_{m_{\mathbf{s}}^*} \mathcal{L} (m_{\mathbf{s}}, W^{}_{}, b^{}_{}; \mathbf{s})$. We will begin by simplifying each of the terms: $\log \left( \mathbb{P} [w|m_{\mathbf{s}}] \right)^{\alpha_{\mathbf{w}}}, \log \left( \mathbb{P} [v|m_{\mathbf{s}}] \right)^{\alpha_{\mathbf{v}}}$, and $\log \left( \mathbb{P} [a|m_{\mathbf{s}}] \right)^{\alpha_{\mathbf{a}}}$.

For the language features, we follow the approach in~\cite{arora2}. We define:
\begin{align}
& \ \ \ \ f_{w}(m_{\mathbf{s}}) \\
&= \log \mathbb{P} [w|m_{\mathbf{s}}]^{\alpha_{\mathbf{w}}} \\
&= {\alpha_{\mathbf{w}}} \log \mathbb{P} [w|m_{\mathbf{s}}] \\
&= {\alpha_{\mathbf{w}}} \log \left[ \alpha p(w) + (1-\alpha) \frac{\exp \left( \langle w , m_{\mathbf{s}} \rangle \right)}{Z_{m_{\mathbf{s}}}} \right]
\end{align}
By taking the gradient $\nabla_{m_{\mathbf{s}}} f_{w}(m_{\mathbf{s}})$ and making a Taylor approximation,
\begin{align}
& \ \ \ \ f_{w}(m_{\mathbf{s}}) \approx f_{w}(0) + \nabla_{m_{\mathbf{s}}} f_{w}(0)^\top m_{\mathbf{s}} \\
&= c + \frac{{\alpha_{\mathbf{w}}} (1-\alpha)/(\alpha Z)}{p(w)+(1-\alpha)/(\alpha Z)} \langle w, m_{\mathbf{s}} \rangle
\end{align}
For the visual features, we can decompose the likelihood $\mathbb{P} [v|m_{\mathbf{s}}]$ as a product of the likelihoods in each coordinate $\prod_{i=1}^{|v|} \mathbb{P} [v(i)|m_{\mathbf{s}}]$ since we assume a diagonal covariance matrix. Let $v(i) \in \mathbb{R}$ denote the $i$th visual feature and $W_v^\mu(i) \in \mathbb{R}^{|m_{\mathbf{s}}|}$ be the $i$-th column of $W_v^\mu$.
\begin{align}
\mu_v(i) &= W_v^\mu(i) m_{\mathbf{s}} + b_v^\mu(i) \\
\sigma_v(i) &= \exp \left( W_v^\sigma(i) m_{\mathbf{s}} + b_v^\sigma(i) \right) \\
v(i) | m_{\mathbf{s}} &\sim N(\mu_v(i), \sigma_v^2(i)) \\
\mathbb{P} [v(i)|m_{\mathbf{s}}] &= \frac{1}{\sqrt{2\pi}\sigma_v(i)} \exp \left( - \frac{(v(i)-\mu_v(i))^2}{2\sigma_v^2(i)} \right)
\end{align}

Define $f_{v(i)}(m_{\mathbf{s}})$ as follows:
\begin{align}
& \ \ \ \ f_{v(i)}(m_{\mathbf{s}}) \\
&= \log \mathbb{P} [v(i)|m_{\mathbf{s}}]^{\alpha_{\mathbf{v}}} \\
&= {\alpha_{\mathbf{v}}} \log \mathbb{P} [v(i)|m_{\mathbf{s}}] \\
&= - {\alpha_{\mathbf{v}}} \log \left( \sqrt{2\pi} \sigma_v(i) \right) - {\alpha_{\mathbf{v}}} \frac{(v(i)-\mu_v(i))^2}{2\sigma_v^2(i)} \\
\nonumber &= - {\alpha_{\mathbf{v}}} \log \left( \sqrt{2\pi} \exp \left( W_v^\sigma(i) m + b_v^\sigma(i) \right) \right) \\
& \ \ \ - {\alpha_{\mathbf{v}}} \frac{(v(i)-W_v^\mu(i) m - b_v^\mu(i))^2}{2\exp \left( W_v^\sigma(i) m_{\mathbf{s}} + b_v^\sigma(i) \right)^2} \\
\nonumber &= - {\alpha_{\mathbf{v}}} \log \sqrt{2\pi} - \left( W_v^\sigma(i) m_{\mathbf{s}} + b_v^\sigma(i) \right) \\
& \ \ \ - {\alpha_{\mathbf{v}}} \frac{(v(i)-W_v^\mu(i) m - b_v^\mu(i))^2}{2\exp \left( 2 W_v^\sigma(i) m_{\mathbf{s}} + 2 b_v^\sigma(i) \right)}
\end{align}

The gradient $\nabla_{m_{\mathbf{s}}} f_{v(i)}(m_{\mathbf{s}})$ is as follows
\begin{align}
& \ \ \ \ \nabla_{m_{\mathbf{s}}} f_{v(i)}(m_{\mathbf{s}}) \\
&= - {\alpha_{\mathbf{v}}}W_v^\sigma(i) - {\alpha_{\mathbf{v}}} \frac{1}{4\sigma_v(i)^4} \left[ 2 (v(i)-\mu_v(i)) \right] \\
\nonumber &= {\alpha_{\mathbf{v}}} \frac{\left[ (v(i)-\mu_v(i)) W_v^\mu(i) + (v(i)-\mu_v(i))^2 W_v^\sigma(i) \right]}{\sigma_v(i)^2} \\
& \ \ \ - {\alpha_{\mathbf{v}}} W_v^\sigma(i) \\
\nonumber &= {\alpha_{\mathbf{v}}} \frac{v(i)-\mu_v(i)}{\sigma_v(i)^2} W_v^\mu(i) \\
& \ \ \ + {\alpha_{\mathbf{v}}} \left(\frac{(v(i)-\mu_v(i))^2}{\sigma_v(i)^2}-1\right) W_v^\sigma(i)
\end{align}

By Taylor expansion, we have that
\begin{align}
& \ \ \ \ f_{v(i)}(m_{\mathbf{s}}) \\
&\approx f_{v(i)}(0) + \nabla_{m_{\mathbf{s}}} f_{v(i)}(0)^\top m_{\mathbf{s}} \\
\nonumber &= \underbrace{- {\alpha_{\mathbf{v}}}\log \left( \sqrt{2\pi} \exp \left( b_v^\sigma(i) \right) \right) - {\alpha_{\mathbf{v}}} \frac{(v(i) - b_v^\mu(i))^2}{2 \exp \left( 2 b_v^\sigma(i) \right)}}_\text{constant with respect to $m_{\mathbf{s}}$} \\
\nonumber & \ \ \ + {\alpha_{\mathbf{v}}} \frac{v(i) - b_v^\mu(i)}{\exp \left(2 b_v^\sigma(i) \right)} \langle W_v^\mu(i), m_{\mathbf{s}} \rangle \\
& \ \ \ + {\alpha_{\mathbf{v}}} \left(\frac{(v(i)-b_v^\mu(i))^2}{\exp \left(2b^\sigma_v(i) \right)}-1\right) \langle W_v^\sigma(i), m_{\mathbf{s}} \rangle \\
\nonumber &= c + {\alpha_{\mathbf{v}}} \frac{v(i) - b_v^\mu(i)}{\exp \left(2 b_v^\sigma(i) \right)} \langle W_v^\mu(i), m_{\mathbf{s}} \rangle \\
& \ \ \ + {\alpha_{\mathbf{v}}} \left(\frac{(v(i)-b_v^\mu(i))^2}{\exp \left(2b^\sigma_v(i) \right)}-1\right) \langle W_v^\sigma(i), m_{\mathbf{s}} \rangle
\end{align}

By our symmetric paramterization of the acoustic features, we have that:
\begin{align}
& \ \ \ \ f_{a(i)}(m_{\mathbf{s}}) \\
\nonumber &\approx c + {\alpha_{\mathbf{a}}} \frac{a(i) - b_a^\mu(i)}{\exp \left(2 b_a^\sigma(i) \right)} \langle W_a^\mu(i), m_{\mathbf{s}} \rangle \\
& \ \ \ + {\alpha_{\mathbf{a}}} \left(\frac{(a(i)-b_a^\mu(i))^2}{\exp \left(2b^\sigma_a(i) \right)}-1\right) \langle W_a^\sigma(i), m_{\mathbf{s}} \rangle
\end{align}

Rewriting this in matrix form, we obtain that
\begin{align}
f_{w}(m_{\mathbf{s}}) = c + \psi_w \langle w, m_{\mathbf{s}} \rangle
\end{align}
\begin{align}
f_{v}(m_{\mathbf{s}}) &= \sum_{i \in |v|} f_{v(i)}(m_{\mathbf{s}}) \\
\nonumber &= c + \left\langle W_v^{\mu\top} (v-b_v^\mu) \psi_v^{(1)}, m_{\mathbf{s}} \right\rangle \\
& \ \ \ \ + \left\langle W_v^{\sigma\top} (v-b_v^\mu) \otimes (v-b_v^\mu) \psi_v^{(2)}, m_{\mathbf{s}} \right\rangle
\end{align}
\begin{align}
f_{a}(m_{\mathbf{s}}) &= \sum_{i \in |a|} f_{a(i)}(m_{\mathbf{s}}) \\
\nonumber &= c + \left\langle W_a^{\mu\top} (a-b_a^\mu) \psi_a^{(1)}, m_{\mathbf{s}} \right \rangle \\
& \ \ \ \ + \left\langle W_a^{\sigma\top} (a-b_a^\mu) \otimes (a-b_a^\mu) \psi_a^{(2)}, m_{\mathbf{s}} \right\rangle
\end{align}
where $\otimes$ denotes Hadamard (element-wise) product and the weights $\psi$'s are given as follows:
\begin{align}
\psi_w &= \frac{{\alpha_\mathbf{w}} (1-\alpha)/(\alpha Z)}{p(w)+(1-\alpha)/(\alpha Z)} \\
\psi_v^{(1)} &= \textrm{diag} \left( \frac{\alpha_{\mathbf{v}}}{\exp \left(2 b_v^\sigma \right)} \right) \\
\psi_v^{(2)} &= \textrm{diag} \left(\frac{\alpha_{\mathbf{v}}}{\exp \left(2b^\sigma_v \right)}-\alpha_{\mathbf{v}}\right) \\
\psi_a^{(1)} &= \textrm{diag} \left( \frac{\alpha_{\mathbf{a}}}{\exp \left(2 b_a^\sigma \right)} \right) \\
\psi_a^{(2)} &= \textrm{diag} \left(\frac{\alpha_{\mathbf{a}}}{\exp \left(2b^\sigma_a \right)}-\alpha_{\mathbf{a}}\right)
\end{align}
Observe that $W_v^{\sigma\top} (v-b_v^\mu)$ is a composition of a shift $-b_v^\mu$ and a linear transformation $W_v^{\sigma\top}$ of the visual features into the multimodal embedding space. Note that $\mathbb{E} [v|m_{\mathbf{s}}] = b_v^\mu$. In other words, this shifts the visual features towards 0 in expectation before transforming them into the multimodal embedding space. Our choice of a Gaussian likelihood for the visual and acoustic features introduces a squared term $W_v^{\sigma\top} (v-b_v^\mu) \otimes (v-b_v^\mu)$ to account for the $\ell_2$ distance present in the Gaussian pdf. Secondly, regarding the weights $\psi$'s, note that: 1) the weights for a modality are proportional to the global hyper-parameters $\alpha$ assigned to that modality, and 2) the weights $\psi_w$ are inversely proportional to $p(w)$ (rare words carry more weight). The weights $\psi_v$'s and $\psi_a$'s scales each feature dimension inversely by their magnitude.

Finally, we know that our objective function~\eqref{objective_app} decomposes as
\begin{align}
\nonumber & \ \ \ \ \mathcal{L} (m_{\mathbf{s}}, W^{}_{}, b^{}_{}; \mathbf{s}) \\
&= \sum_{w \in \mathbf{w}} f_{w}(m_{\mathbf{s}}) + \sum_{v \in \mathbf{v}} f_{v}(m_{\mathbf{s}}) + \sum_{a \in \mathbf{a}} f_{a}(m_{\mathbf{s}})
\end{align}

We now use the fact that $\max_{x: \|x\|_2^2 =1} \textrm{constant } + \langle x, g \rangle = g / \| g \|$. If we assume that $m_{\mathbf{s}}^*$ lies on the unit sphere, the maximum likelihood estimate for $m_{\mathbf{s}}$ is approximately:
\begin{align}
&\nonumber \ \ \ \ m_{\mathbf{s}}^* \\
\nonumber &= \sum_{w \in \mathbf{w}} \psi_w w + \sum_{v \in \mathbf{v}} \left( W_v^{\mu\top} \tilde{v}^{(1)} \psi_v^{(1)} + W_v^{\sigma\top} \tilde{v}^{(2)} \psi_v^{(2)} \right)\\
&+ \sum_{a \in \mathbf{a}} \left( W_a^{\mu\top} \tilde{a}^{(1)} \psi_a^{(1)} + W_a^{\sigma\top} \tilde{a}^{(2)} \psi_a^{(2)} \right).
\end{align}
where we have rewritten the shifted (and squared) visual and acoustic terms as
\begin{align}
\tilde{v}^{(1)} &= v-b_v^\mu \\
\tilde{v}^{(2)} &= (v-b_v^\mu) \otimes (v-b_v^\mu) \\
\tilde{a}^{(1)} &= a-b_a^\mu \\
\tilde{a}^{(2)} &= (a-b_a^\mu) \otimes (a-b_a^\mu)
\end{align}
which concludes the proof.

\subsection{Multimodal Features}

Here we present extra details on feature extraction for the language, visual and acoustic modalities.

\noindent \textbf{Language:} We used 300 dimensional GloVe word embeddings trained on 840 billion tokens from the Common Crawl dataset \cite{pennington2014glove}. These word embeddings were used to embed a sequence of individual words from video segment transcripts into a sequence of word vectors that represent spoken text. 

\noindent \textbf{Visual:} The library Facet \cite{emotient} is used to extract a set of visual features including facial action units, facial landmarks, head pose, gaze tracking and HOG features \cite{zhu2006fast}. These visual features are extracted from the full video segment at 30Hz to form a sequence of facial gesture measures throughout time.

\noindent \textbf{Acoustic:} The software COVAREP \cite{degottex2014covarep} is used to extract acoustic features including 12 Mel-frequency cepstral coefficients, pitch tracking and voiced/unvoiced segmenting features \cite{drugman2011joint}, glottal source parameters \cite{childers1991vocal,drugman2012detection,alku1992glottal,alku1997parabolic,alku2002normalized}, peak slope parameters and maxima dispersion quotients \cite{kane2013wavelet}. These visual features are extracted from the full audio clip of each segment at 100Hz to form a sequence that represent variations in tone of voice over an audio segment.

\subsection{Multimodal Alignment}
We perform forced alignment using P2FA \cite{P2FA} to obtain the exact utterance time-stamp of each word. This allows us to align the three modalities together. Since words are considered the basic units of language we use the interval duration of each word utterance as one time-step. We acquire the aligned video and audio features by computing the expectation of their modality feature values over the word utterance time interval \cite{zadeh2018memory}.

\clearpage

\bibliography{naaclhlt2019}
\bibliographystyle{acl_natbib}